\newcommand{\printfnsymbol}[1]{%
  \textsuperscript{\@fnsymbol{#1}}%
}
\def\eqref#1{equation~\ref{#1}}
\def\1{\bm{1}}
\DeclareMathAlphabet{\mathsfit}{\encodingdefault}{\sfdefault}{m}{sl}
\SetMathAlphabet{\mathsfit}{bold}{\encodingdefault}{\sfdefault}{bx}{n}
\newcommand{\E}{\mathbb{E}}
\newcommand{\R}{\mathbb{R}}
\title{VCNet and Functional Targeted Regularization For Learning Causal Effects of Continuous Treatments}
\author{Lizhen Nie\thanks{Equal contribution and corresponding authors.}$^{\,\,\,1}$,\quad Mao Ye\printfnsymbol{1}$^{2}$, \quad Qiang Liu$^{2}$,  \quad Dan Nicolae$^{1}$ \\
$^1$Department of Statistics,
The University of Chicago\\
$^2$Department of Computer Science,
University of Texas at Austin\\
\texttt{lizhen@statistics.uchicago.edu, my21@cs.utexas.edu,}\\
\texttt{lqiang@cs.utexas.edu, nicolae@statistics.uchicago.edu} 
}
\definecolor{darkblue}{rgb}{0, 0, 0.5}
\global\long\def\th{\boldsymbol{\theta}}
\def\X {\bm{X}}
\def\C {\,|\:}
\newtheorem{corollary}{Corollary}
\newtheorem{remark}{Remark}
\newtheorem{thm}{Theorem}
\newtheorem{assump}{Assumption}
\newtheorem{lemma}{Lemma}
\global\long\def\E{\mathbb{E}}%
\global\long\def\P{\mathbb{P}}%
\global\long\def\Pr{\text{Prob}}%
\global\long\def\R{\mathbb{R}}%
\global\long\def\w{\boldsymbol{\omega}}%
\global\long\def\th{\boldsymbol{\theta}}%
\global\long\def\x{\boldsymbol{x}}%
\global\long\def\X{\boldsymbol{X}}%
\global\long\def\F{\mathcal{F}}%
\global\long\def\z{\boldsymbol{z}}%
\global\long\def\nn{\text{NN}}%
\global\long\def\grid{\text{grid}}%
\global\long\def\A{\textbf{A}}%
\global\long\def\bp{\boldsymbol{\Phi}}%
\global\long\def\O{O}%
\global\long\def\vpk{\bm{\varphi}^{K_n}}%
\global\long\def\tildevpk{\tilde{\bm{\varphi}}^{K_n}}%
\global\long\def\hbb{\hat{\bm{b}}}%
\global\long\def\hqn{\hat{Q}_n}%
\global\long\def\hpin{\hat{\pi}_n}%
\global\long\def\tbn{\tilde{B}_n}%
\global\long\def\Zn{\bm{Z}_n}%
\global\long\def\En{\mathcal{\mathcal{E}}_{n}}%
\global\long\def\Sn{\mathcal{S}_{n}}%
\global\long\def\Dn{\mathcal{D}_{n}}%
\global\long\def\An{\mathcal{A}_{n}}%
\global\long\def\hpi{\bar{h}_{\hat{\pi}}}%
\global\long\def\E{\mathbb{E}}%
\global\long\def\epi{e_{\hat{\pi}}}%
\global\long\def\bmalpha{\check{\bm{\alpha}}}%
\global\long\def\vecI{\bm{1}}%
\global\long\def\I{\mathbb{I}}%
\global\long\def\Rad{\text{Rad}_n}%
\begin{document}

\maketitle

\begin{abstract}
    Motivated by the rising abundance of observational data with continuous treatments, we investigate the problem of estimating the average dose-response curve (ADRF). Available parametric methods are limited in their model space, and previous attempts in leveraging neural network to enhance model expressiveness relied on partitioning continuous treatment into blocks and using separate heads for each block; this however produces in practice discontinuous ADRFs. Therefore, the question of how to adapt the structure and training of neural network to estimate ADRFs remains open. This paper makes two important contributions. First, we propose a novel varying coefficient neural network (VCNet) that improves model expressiveness while preserving continuity of the estimated ADRF. Second, to improve finite sample performance, we generalize targeted regularization to obtain a doubly robust estimator of the whole ADRF curve.
\end{abstract}

\section{Introduction}\label{sec:introduction}
Continuous treatments arise in many fields, including healthcare, public policy, and economics. With the widespread accumulation of observational data, estimating the average dose-response function (ADRF) while correcting for confounders has become an important problem \citep{hirano2004propensity, imai2004causal, kennedy2017non, fong2018covariate}.

Recently, papers in causal inference \citep{johansson2016learning, alaa2017bayesian, shalit2017estimating, schwab2019learning, farrell2018deep,shi2019adapting} 
have utilized feed forward neural network for modeling. The success of using neural network model lies in the fact that neural networks, unlike traditional parametric models, are very flexible in modeling the complex causal relationship as shown by the universal approximation theorem \citep{csaji2001approximation}. Also, unlike traditional non-parametric models, neural network has been shown to be powerful when dealing with high-dimensional input (i.e., \citet{masci2011stacked, johansson2016learning}), which implies its potential for dealing with high-dimensional confounders.

A successful application of neural network to causal inference requires a specially designed network structure that distinguishes the treatment variable from other covariates, since otherwise the treatment information might be lost in the high dimensional latent representation \citep{shalit2017estimating}. However, most of the existing network structures are designed for binary treatments and are difficult to generalize to treatments taking value in continuum. For example, \cite{shalit2017estimating, louizos2017causal, schwab2019learning, shi2019adapting} used separate prediction heads for the two treatment options and this structure is not directly applicable for continuous treatments as there is an infinite number of treatment levels. To deal with a continuous treatment, recent work \citep{schwab2019learning} proposed a modification called DRNet. DRNet partitions a continuous treatment into blocks and for each block, trains a separate head, in which the treatment is concatenated into each hidden layer (see Figure \ref{fig:structure}). Despite the improvements made by the building block of DRNet, this structure does not take the continuity of ADRF \citep{prichard1971assessment, schneider1993dose,threlfall1999sun} into account, and  it produces discontinuous ADRF estimators in practice (see Figure \ref{fig:discontinuity}). 




We propose a new network building block that is able to strengthen the influence of treatment but also preserve the continuity of ADRF. Under binary treatment, previous neural network models for treatment effect estimation use separate prediction heads to model the mapping from covariates to (expected) outcome under different treatment levels. When it comes to the continuous treatment case, by the continuity of ADRF, this mapping should change continuously with respect to the treatment. To achieve this, motivated by the varying coefficient model (\cite{hastie1993varying}, \cite{fan1999statistical}, \cite{chiang2001smoothing}), one can allow the weights of the prediction head to be continuous functions of the treatment. 
This serves as the first contribution here, called Varying Coefficient Network (VCNet). In VCNet, once the activation function is continuous, the mapping defined by the network automatically produces continuous ADRF estimators as shown in Figure \ref{fig:discontinuity} but also prevents the treatment information from being lost in its high dimensional latent representation.

\begin{figure}
\begin{center}
\begin{subfigure}{.4\textwidth}\centering
  \includegraphics[width=.7\linewidth,height=.6\textwidth]{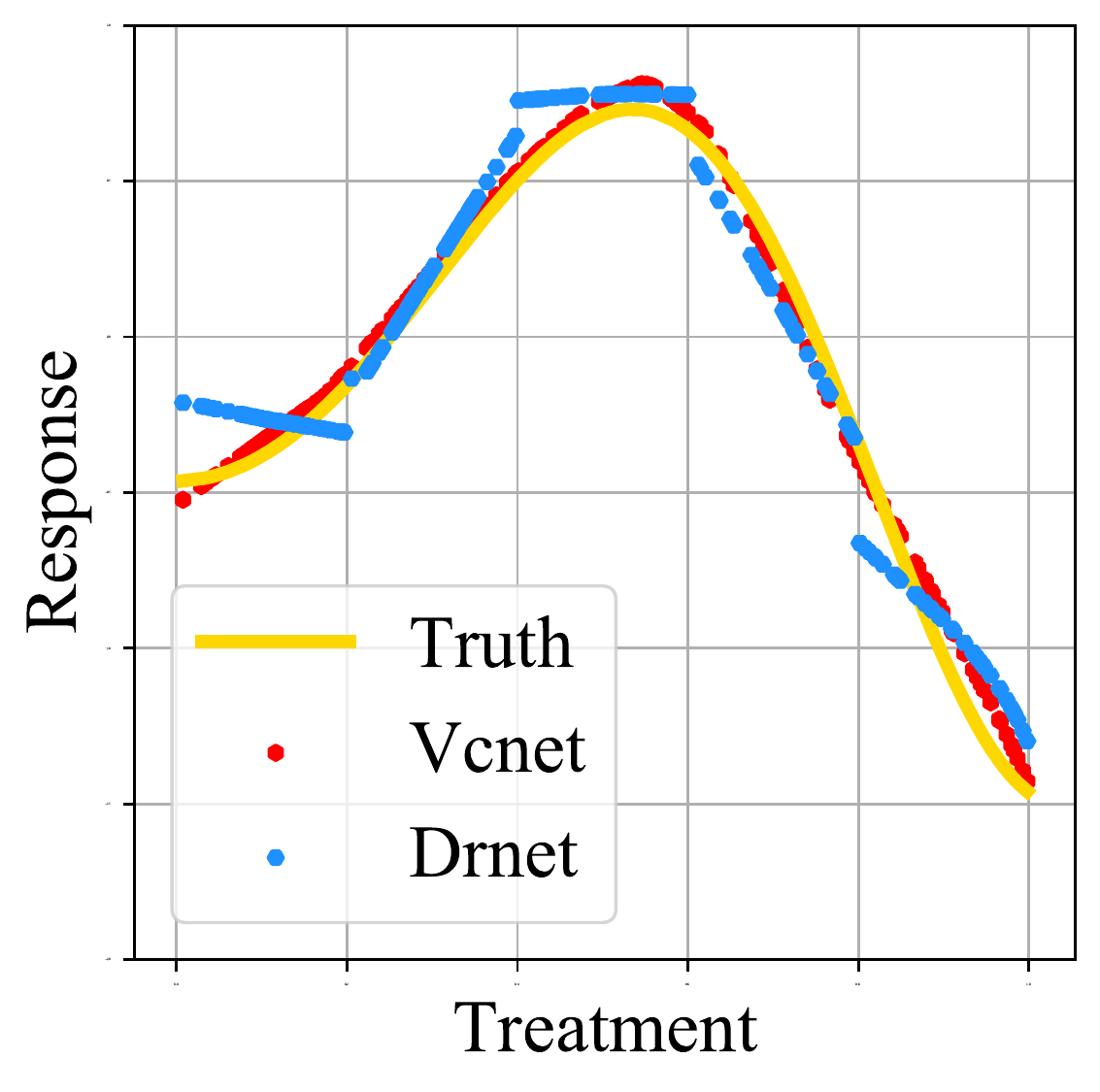} 
\end{subfigure}\hspace{-15mm}
\begin{subfigure}{.4\textwidth}\centering
  \includegraphics[width=.7\linewidth,height=.6\textwidth]{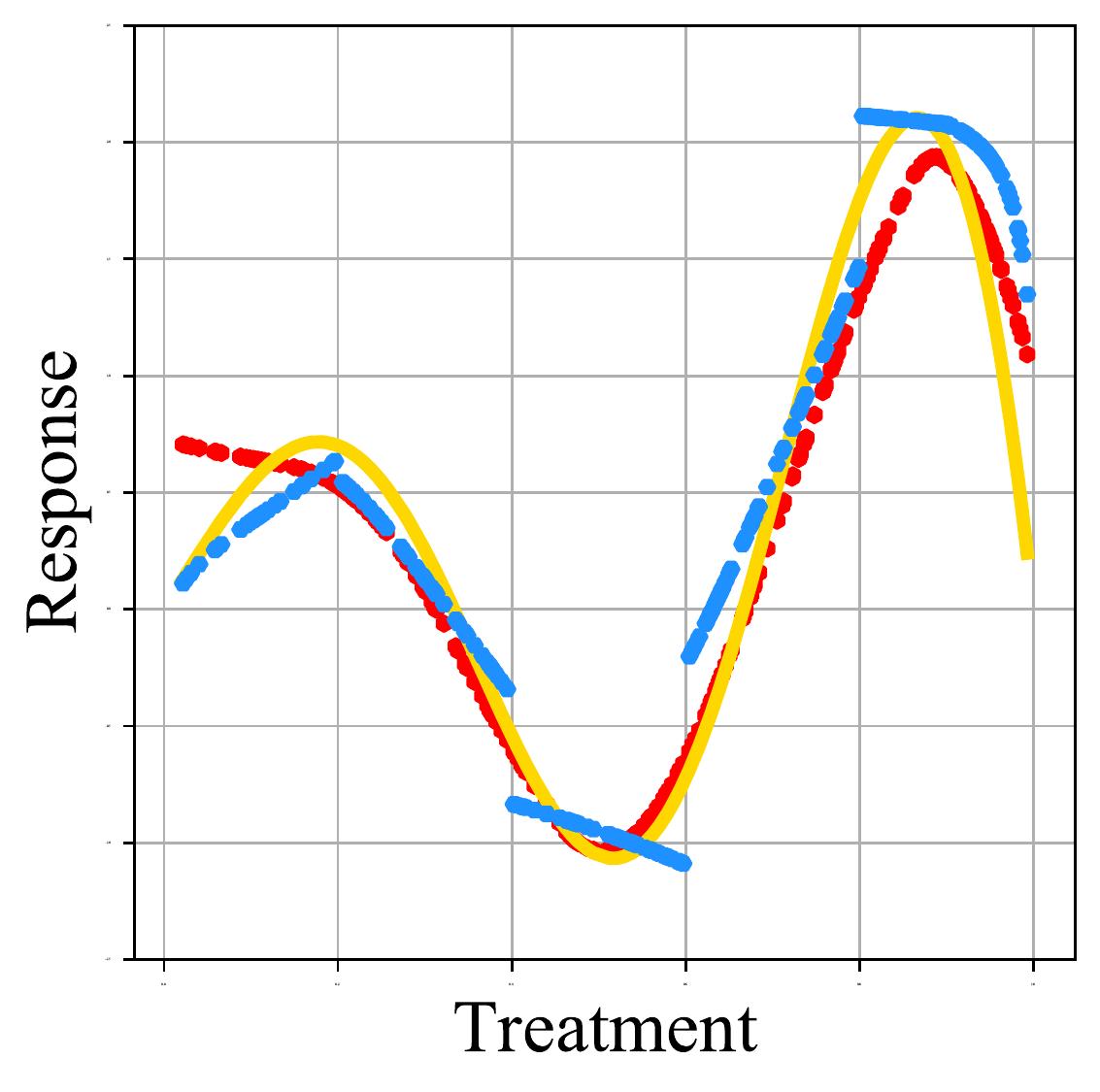} 
\end{subfigure}\hspace{-15mm}
\begin{subfigure}{.4\textwidth}\centering
  \includegraphics[width=.7\linewidth,height=.6\textwidth]{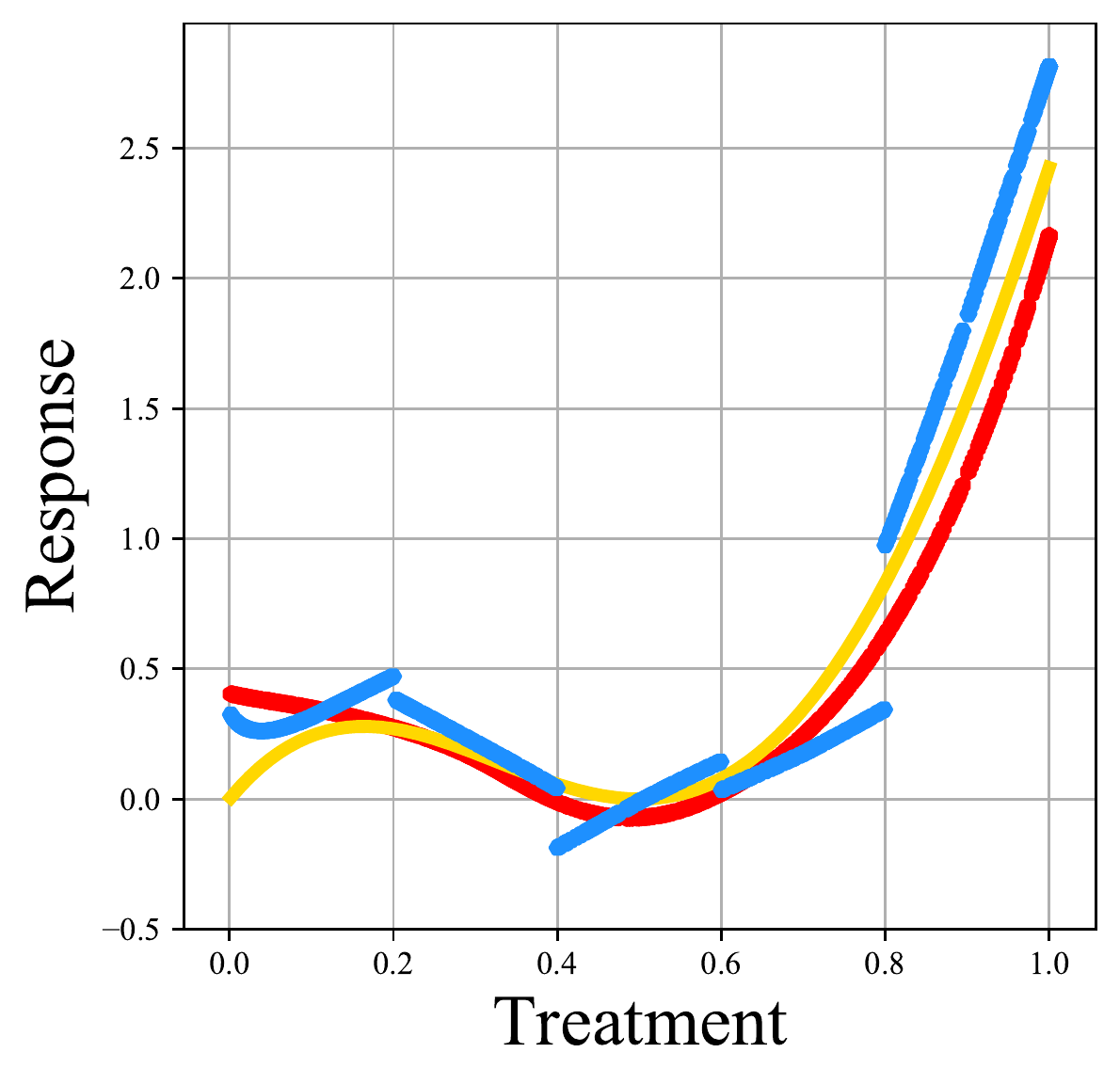} 
\end{subfigure}
\end{center}
\caption{Estimated ADRF on testing set from a typical run of VCNet and DRNet. From left to right panels are results on simulation, IHDP and News dataset. Both VCNet and DRNet are well optimized. Blue points denote DRNet estimation and red points VCNet. The truth is shown in yellow solid line.}
\vspace{-0.5em}
\label{fig:discontinuity}
\end{figure}


The second contribution of this paper is to generalize targeted regularization \citep{shi2019adapting} to obtain a doubly robust estimator of the whole ADRF curve, which improves finite sample performance. Targeted regularization was previously used for estimating a scalar quantity \citep{shi2019adapting} and it associates an extra perturbation parameter to the scalar quantity of interest. While adapting it to a finite-dimensional vector is not difficult, generalization to a curve is far less straightforward. Difficulties arise from the fact that ADRFs cannot be regularized at each treatment level independently because the number of possible levels is infinite and, thus, the model complexity cannot be controlled with the introduction of infinite extra perturbations parameters. Utilizing the continuity (and smoothness) of ADRF \citep{schneider1993dose,threlfall1999sun}, we introduce smoothing to control model complexity. Its model size increases in a specific manner to balance model complexity and regularization strength.
Moreover, the original targeted regularization in \citet{shi2019adapting} is not guaranteed to obtain a doubly robust estimator. By allowing regularization strength to depend on sample size, we obtain a consistent and doubly robust estimator under mild assumptions.
Noticing the connection between targeted regularization and TMLE \citep{van2006targeted}, a by-product of this work is that we give the first (to the best of our knowledge) generalization of TMLE to estimating a function.

We do experiments on both synthetic and semi-synthetic datasets, finding that VCNet and targeted regularization boost performance independently. Using them jointly consistently achieves  state-of-the-art performance.



\paragraph{Notation}
We denote the Dirac delta function by $\delta(\cdot)$. We use $\E$ to denote expectation, $\P$ to denote population probability measure and we write $\P(f)=\int f(z) d\P(z)$. Similarly, we denote $\P_n$ as the empirical measure and we write $\P_n(f)=\int f(z) d\P_n(z)$. We denote $\lceil n\rceil$ as the least integer greater than or equal to $n$, and we denote $\lfloor n\rfloor$ as the greatest integer less than or equal to $n$. We use $\tau$ to denote Rademacher random variables. We denote Rademacher complexity of a function class $\mathcal{F}:\mathcal{X}\rightarrow \R$ as $\Rad(\mathcal{F})=\E\left(\sup_{f\in\mathcal{F}}\left|\frac{1}{n}\sum_{i=1}^n\tau_i f(X_i)\right|\right)$. Given two functions $f_1, f_2: \mathcal{X} \to \R$, we define $\left\Vert f_{1}-f_{2}\right\Vert _{\infty}=\sup_{x\in\mathcal{X}}\left|f_{1}(x)-f_{2}(x)\right|$ and $\left\Vert f_{1}-f_{2}\right\Vert _{L^{2}}=\left(\int_{x\in\mathcal{X}}\left(f_{1}(x)-f_{2}(x)\right)^{2}dx\right)^{1/2}$. For a function class $\mathcal{F}$, we define $\left\Vert \mathcal{F}\right\Vert _{\infty}=\sup_{f\in\mathcal{F}}\left\Vert f\right\Vert _{\infty}$. We denote stochastic boundedness with $O_p$ and convergence in probability with $o_p$. Given two random variable $X_1$ and $X_2$, $X_1 \perp X_2$ denotes $X_1$ and $X_2$ are independent. We use $a_n\asymp b_n$ to denote that both $a_n/b_n$ and $b_n/a_n$ are bounded.

\section{Problem Statement and Background}\label{sec:problem_statement}


Suppose we observe an i.i.d sample $\{(y_i,\x_i,t_i)\}_{i=1}^{n}$ where $(y_i,\x_i,t_i)$ is a realization of random vector $(Y,\X,T)$ with support $(\mathcal{Y}\times\mathcal{X}\times\mathcal{T})$. Here $\X$ is a vector of covariates, $T$ is a continuous treatment, and $Y$ is the outcome.  Without loss of generality, we assume $\mathcal{T}=[0,1]$.
We want to estimate the Average Dose Response Function (ADRF) 
\[
\psi(t):=\E(Y \mid \text{do}(T = t)),
\]
which is the potential expected outcome that would have been observed under treatment level $t$.
Suppose the conditional density of $T$ given $\X$ is $\pi(T\mid \X)$. 
Throughout this paper, we make the following assumptions:
\begin{assump}\label{assump}
(a) There exists some constant $c>0$ such that $\pi(t\mid \x)\ge c$ for all $\x\in\mathcal{X}$ and $t\in\mathcal{T}$.
(b) The measured covariate $\X$ blocks all backdoor paths of the treatment and outcome.
\end{assump}

\begin{remark}
Assumption (a) implies that treatment is assigned in a way that every subject has some chance of receiving every treatment level regardless of its covariates, which is a standard assumption to establish doubly robust estimators. Assumption (b) implies that the casual effect is identifiable, i.e., can be estimated using observational data.
\end{remark}

\section{VCNet: Varying Coefficient Network Structure}\label{sec:structure}
\begin{figure}
    \centering
    \includegraphics[width=.8\linewidth]{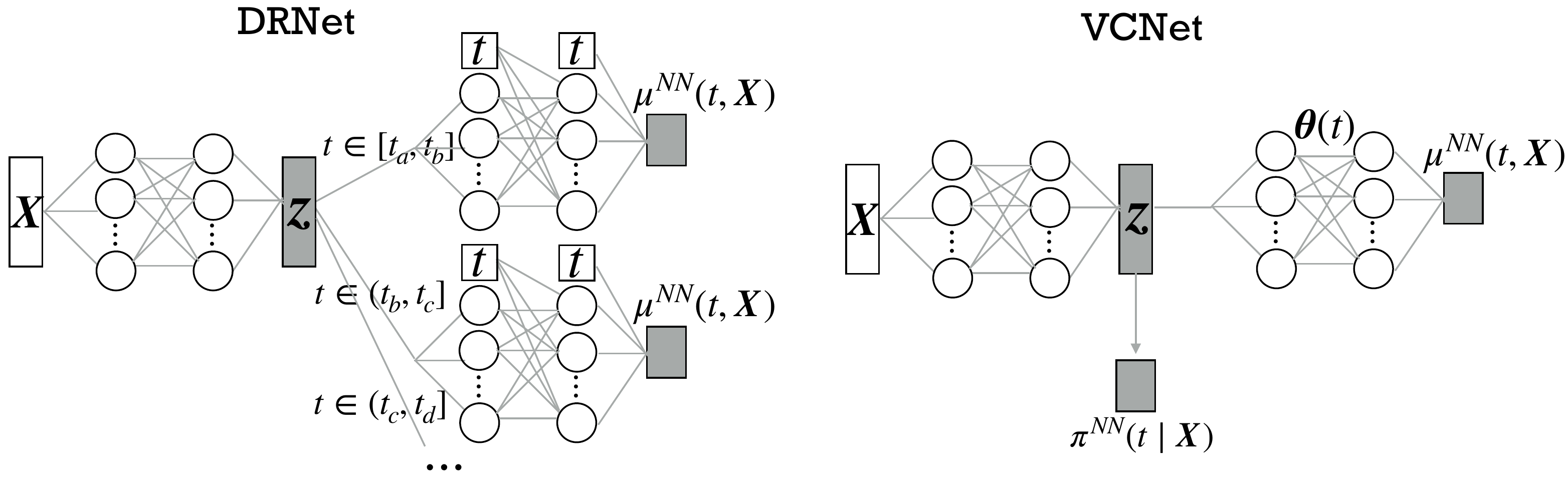}
    \caption{Comparison of network structure between DRNet and VCNet.}
    \label{fig:structure}
\end{figure}

Under Assumption \ref{assump}, we have
\[
\psi(t)=\E\left[\E(Y\mid\X,T=t)\right].
\]
Thus, a naive estimator for $\psi$ is to obtain an estimator $\hat\mu$ of $\mu$, and use $\hat\psi(t)=\frac{1}{n}\sum_{i=1}^n\hat \mu(t,\x_i)$. Here $\mu(t,\x):=\E(Y\mid\X=\x,T=t)$ and $\hat{\mu}$ is its estimator. Following \cite{shi2019adapting}, we utilize the sufficiency of
the generalized propensity score $\pi(t\mid \X)$ for estimating $\psi$ \citep{hirano2004propensity}:
\[
\psi(t)
=\E\left[\E(Y\mid\pi(t\mid\X),T=t)\right].
\]
It indicates that learning $\pi(t\mid\X)$ helps the removal of noise and distillation of useful information in $\X$ for estimating $\psi$. 
Similar to \cite{shi2019adapting}, we add a separate head for estimating $\pi(t\mid\X)$, and use the feature $\boldsymbol{z}$ extracted by it for downstream estimation of $\mu(t,\x)$ (see Figure \ref{fig:structure}). Our contribution here is to propose the varying coefficient structure of the prediction head for $\mu(t,\x)$, which addresses difficulties confronting continuous treatment as discussed in the following paragraph.


\subsection{The Varying Coefficient Prediction Head}
Our aim is to predict $\mu(t,\x)=\E(Y\mid T=t,\X=\x)$. 
A naive method is to train a neural network which takes $(t,\x)$ as input in the first layer and outputs $\mu(t,\x)$ in the last layer. However, the role of treatment $t$ is different from that of $\x$ and the influence of $t$ might be lost in the high-dimensional hidden features \citep{shalit2017estimating}. Aware of this problem, previous work \citep{schwab2019learning} divides the range of treatment into blocks, and then use separate prediction heads for each block (see Figure \ref{fig:structure}). To further strengthen the influence of treatment $t$, \cite{schwab2019learning} appends $t$ to each hidden layer. One problem of this structure is that it destroys the continuity of $\mu$ 
by using different prediction heads for each block of treatment levels.
In practice, DRNet indeed produces discontinuous curve (see Figure \ref{fig:discontinuity}).  

In order to simultaneously emphasize the influence of treatment while preserving the continuity of ADRF, we propose a varying coefficient neural network (VCNet). In VCNet, the prediction head for $\mu$ is defined as
\[
\mu^{\nn}(t,\x)=f_{\th(t)}(\z),
\]
where input $\z$ is the feature extracted by the conditional density estimator,  and $f_{\th(t)}$ is a (deep) neural network with parameter $\th(t)$ instead of a fixed $\th$. It means that the nonlinear function defined by the neural network depends on the varying treatment level $t$, and thus we call this structure the varying coefficient structure \citep{hastie1993varying, fan1999statistical, chiang2001smoothing}. For example, if $f$ is an one-hidden-layer ReLU network, we have $f_{\th(t)}(\z) = \sum_{i=1}^Da_{i}(t)\text{ReLU}(\boldsymbol{b}_i(t)^{\top}\z)$, where $a_{i}(t)$ and $\boldsymbol{b}_i(t)$ are weights of the neural network, and $\th(t)=[(a_1(t),\boldsymbol{b}_1(t)),\cdots,(a_D(t),\boldsymbol{b}_D(t))]^\top$. Here we use splines to model $\th(t)$. Suppose that $\th(t)=[\theta_{1}(t),...,\theta_{d_{\th}}(t)]^\top\in\R^{d_{\th(t)}}$,
where $d_{\th(t)}$ is the dimension of $\th(t)$. We have 
\[
\theta_{i}(t)=\sum_{l=1}^{L}a_{i,\ell}\varphi^{\nn}_{\ell}(t),
\]
where $\{\varphi^{\nn}_{l}\}_{\ell=1}^{L}$ are the spline basis and $a_{i,\ell}$'s are the coefficients. Thus, we
have 
\begin{align*}
\th(t) & =\A\bp(t)\quad\text{where}\quad
\A  =\left[\begin{array}{ccc}
a_{1,1} & \cdots & a_{1,L}\\
\vdots & \ddots & \vdots\\
a_{d_{\th},1} & \cdots & a_{d_{\th},L}
\end{array}\right]\quad\text{and}\quad
\bp(t) =\left[\varphi^{\nn}_{1}(t),\cdots, \varphi^{\nn}_{L}(t)\right]^\top.
\end{align*}

It is worth mentioning that by choosing spline basis of the form $\mathbb{I}(t_0\le t< t_1)$ with different $t_0,t_1$, we recover the structure in \cite{schwab2019learning}, which has a separate prediction head for each block. It indicates that DRNet can also be viewed as a special case of VCNet (under a suboptimal choice of basis functions). 

In VCNet, the influence of treatment effect $t$ on the outcome directly enters through parameters $\th(t)$ of the neural network, which distinguishes treatment from the other covariates and avoids the treatment information from being lost. Under typical choices of spline basis such as B-spline, once the activation function is continuous, VCNet will automatically produce continuous ADRF estimators.

\subsection{Conditional Density Estimator} 
Recall that the input feature $\z$ to $\mu^{\nn}$ is extracted by the conditional density estimator for $\pi(t\mid\x)$. Here we propose a simple network to estimate $\pi$, which is a direct generalization of the conditional probability estimating head in \cite{shi2019adapting}. 
Notice that $t\in[0,1]$ and the conditional density $\pi(t\mid\x)$ is continuous with respect to treatment $t$ for any given $\x$. A continuous function can be effectively approximated by piecewise linear functions. Thus, we divide $[0,1]$ equally into $B$ grids, estimate the conditional density $\pi(\cdot\mid\x)$ on the $(B+1)$ grid points, and the conditional density for other $t$'s are calculated via linear interpolation. To be more specific, we define the network $\pi^{\nn}_{\text{grid}}$ as
\begin{align*}
\pi^{\nn}_{\text{grid}}(\x) & =\text{softmax}(\w_{2}\z)\in\R^{B+1},
\quad
\text{where}
\quad
\z  =f_{\w_{1}}(\x).
\end{align*}
Here $\z\in\R^{h}$ is the hidden feature extracted by the
network, $\w_{1}$ is the parameter for the nonlinear mapping $f_{\w_{1}}$,
$\w_{2}\in\R^{(B+1)\times h}$, $\pi_{\grid}^{\nn}(\x)=[\pi_{\grid}^{0,\nn}(\x),....,\pi_{\grid}^{B,\nn}(\x)]$ and  $\pi_{\grid}^{i,\nn}(\x)$ is the estimated conditional density of $T=i/B$ given $\X=\x$. This structure is analogous to a classification network by removing the last layer which outputs the class with highest softmax score.
Estimation of conditional density at other $t$'s given $\X=\x$ are obtained via linear interpolation:
\[
\pi^{\nn}(t\mid\x)=\pi_{\grid}^{t_{1},\nn}(\x)+B\left(\pi_{\grid}^{t_{2},\nn}(\x)-\pi_{\grid}^{t_{1},\nn}(\x)\right)\left(t-t_{1}\right),\quad\text{where}\quad t_1=\lfloor Bt\rfloor,\,t_2=\lceil Bt\rceil.
\]
This estimator $\pi^{\nn}$ is continuous with respect to $t$ for any given $\x$ and we finally rescale it to yield a valid  density, i.e., $\pi^{\nn}(t\mid\x)\geq 0,\,\forall t,\x$ and $\int_{t=0}^{1}\pi^{\nn}(t\mid\x)dt=1,\,\forall\x$. 

There are other options to estimate the conditional density. Popular methods include the mixture density network  (\cite{bishop1994mixture}), the kernel mixture network (\cite{ambrogioni2017kernel}) and normalizing flows (\cite{rezende2015variational}, \cite{dinh2016density}, \cite{trippe2018conditional}). Here treatment levels are bounded, and thus Gaussian mixtures are not applicable. In current estimator, the linear interpolation for estimating conditional density on non-grid points can be replaced by kernel smoothing, which is more computationally intensive due to the calculation of normalizing constant. Other techniques including smoothness regularization and data normalization (\cite{rothfuss2019conditional}) can be implemented to further enhance the performance. However, density estimation is not the main focus of this paper. Thus, for simplicity, in all experiments we use the aforementioned method without other techniques and it works quite well on datasets we tried.


\subsection{Training}
Notice that our model requires $\pi^\nn$ to extract good latent features $\z$ as the input for $\mu^\nn$ to predict. This can be achieved by training $\pi^\nn$ to estimate the conditional density, which motivates us to train $\pi^{\nn}$ and ${\mu}^{\nn}$ simultaneously by minimizing the following loss:
\begin{equation}\label{eq:loss_without_tr}
    \mathcal{L}[\mu^{\text{NN}}, \pi^{\text{NN}}] = \frac{1}{n}\sum_{i=1}^n
    \left(y_i-\mu^{\nn}(t_i,\x_i)\right)^2 -\frac{\alpha}{n}\sum_{i=1}^n{\log(\pi^{\nn}(t_i\C \x_i))}.
\end{equation}
In loss (\ref{eq:loss_without_tr}), the first term measures the prediction loss from $\mu^{\nn}$. The second term measures the loss from $\pi^{\nn}$ and is the negative log likelihood. And $\alpha$ controls the relative weights of the two losses. 

Denote $\hat \mu$, $\hat\pi$ as the optimal solution of the above empirical risk minimization problem (\ref{eq:loss_without_tr}). After getting $\hat\mu$, one can estimate $\psi(\cdot)$ by $\hat\psi(\cdot)=\frac{1}{n}\sum_{i=1}^n\hat \mu(\cdot,\x_i)$. The correctness of this naive estimator relies on whether the truth $\mu$ is in the function space defined by the neural network model. However, we can plug $\hat \mu$ and $\hat\pi$ into the non-parametric estimating equation (to be introduced later) to obtain a doubly robust estimator of $\psi(t)$. In this way, we are able to produce an (asymptotically) correct estimator if any one of $\mu$ or $\pi$ is in the model space of neural network. The next section discusses challenges in obtaining such a doubly robust estimator and provides our solution.


\section{Functional Targeted Regularization}\label{sec:targeted_regularization}
In this section we improve upon the previous method by utilizing semiparametric theory on doubly robust estimators. Doubly robust estimators are built upon $\hat\pi(t\mid\x)$ and $\hat \mu(t,\x)$, and it yields a consistent estimator for $\psi$ even if one of them is inconsistent. When both $\hat\pi(t\mid\x)$ and $\hat \mu(t,\x)$ are consistent, a doubly robust estimator leads to faster rates of convergence.
Here our task is to estimate the whole ADRF curve, which comes with additional challenges. First, we need to find a doubly robust estimator of $\psi(t_0)$ for any $t_0\in[0,1]$.

\subsection{Doubly Robust Estimator}
Before we proceed, we define the following quantity:
\begin{align*}
\zeta_{t_{0}}(Y,\X,T,\pi,\mu,\psi) & =q_{t_{0}}(Y,\X,T,\mu,\pi)+\mu(t_{0},\X)-\psi(t_{0}),\\
\text{where}\ \ q_{t_{0}}(Y,\X,T,\mu,\pi) & =\delta(T-t_{0})\frac{Y-\mu(T,\X)}{\pi(T\mid\X)}.
\end{align*}

The following theorem serves as the basis for our subsequent estimators:
\begin{thm}\label{thm:doubly_robust}
Under assumption \ref{assump} and assume that $\hat{\pi}(t\mid \x)\ge c>0$ for all $\x\in\mathcal{X}$ and $t\in\mathcal{T}$.
For any $t_0\in\mathcal{T}$, $\zeta_{t_{0}}$ is the efficient influence function for $\psi(t_0)$.
Moreover, $\zeta_{t_0}$ is doubly robust in the sense that 
\[
    \P\zeta_{t_0}(Y,\X,T,\hat{\pi},\hat{\mu},\psi)=0
\]
if either $\hat{\pi}=\pi$ or $\hat{\mu}=\mu$. 
Further, if $\left\Vert \hat{\pi}-\pi\right\Vert _{\infty}=\O_{p}(r_1(n))$ and $\left\Vert \hat{\mu}-\mu\right\Vert _{\infty}=\O_{p}(r_2(n))$, we have 
\[
\sup_{t_{0}\in\mathcal{T}}\left|\P\zeta_{t_0}(Y,\X,T,\hat{\pi},\hat{\mu},\psi)\right|=\O_p(r_1(n)r_2(n)).
\]
\end{thm}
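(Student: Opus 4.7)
My plan has three parts matching the three claims of the theorem: (i) that $\zeta_{t_0}$ is the efficient influence function of $\psi(t_0)$; (ii) double robustness of the resulting estimating equation; (iii) a uniform second-order remainder bound. I will handle (i) by pathwise differentiation, and reduce (ii) and (iii) to a single algebraic identity expressing $\P\zeta_{t_0}(Y,\X,T,\hat\pi,\hat\mu,\psi)$ as a product of the $\pi$ and $\mu$ errors at $t_0$.

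For (i), I consider a regular one-dimensional submodel $p_\epsilon(y,\x,t)=p_\epsilon(y\mid\x,t)\,\pi_\epsilon(t\mid\x)\,p_{\epsilon,\X}(\x)$ passing through the truth at $\epsilon=0$, with full score $s=s(y\mid\x,t)+s_{T\mid\X}(t\mid\x)+s_\X(\x)$. Writing $\psi_\epsilon(t_0)=\int\mu_\epsilon(t_0,\x)\,p_{\epsilon,\X}(\x)\,d\x$ and differentiating gives
\begin{equation*}
\frac{d}{d\epsilon}\psi_\epsilon(t_0)\Big|_{\epsilon=0}=\int p_\X(\x)\!\int y\,s(y\mid\x,t_0)\,p(y\mid\x,t_0)\,dy\,d\x+\int \mu(t_0,\x)\,s_\X(\x)\,p_\X(\x)\,d\x,
\end{equation*}
with no $s_{T\mid\X}$ contribution. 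I will then verify that $\E[\zeta_{t_0}\,s]$ matches this term by term: the $\delta$-factor in $q_{t_0}$ collapses the $t$-integral onto $t_0$ so that pairing with $s(y\mid\x,t)$ reproduces the first term; the covariate score pairs only with $\mu(t_0,\X)-\psi(t_0)$ and yields the second term; and both $s_{T\mid\X}$ contributions vanish by $\E[Y-\mu(T,\X)\mid T,\X]=0$ and $\E[s_{T\mid\X}\mid\X]=0$. Combined with $\E[\zeta_{t_0}]=0$ (the $\hat\pi=\pi,\hat\mu=\mu$ specialization of the identity below), this places $\zeta_{t_0}$ in the nonparametric tangent space and identifies it as the EIF.

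For (ii) and (iii), I substitute $(\hat\pi,\hat\mu)$ into $\zeta_{t_0}$ and use the $\delta$ to collapse the $(Y,T)$-integration to $T=t_0$, producing the key identity
\begin{equation*}
\P\zeta_{t_0}(Y,\X,T,\hat\pi,\hat\mu,\psi)=\E_\X\!\left[\frac{\pi(t_0\mid\X)-\hat\pi(t_0\mid\X)}{\hat\pi(t_0\mid\X)}\bigl(\mu(t_0,\X)-\hat\mu(t_0,\X)\bigr)\right].
\end{equation*}
Either factor being identically zero forces the left side to vanish, which is double robustness. The uniform bound $\hat\pi\ge c$ then yields $|\P\zeta_{t_0}|\le c^{-1}\|\pi-\hat\pi\|_\infty\|\mu-\hat\mu\|_\infty$, and since the right-hand side is independent of $t_0$, the uniform claim $\sup_{t_0}|\P\zeta_{t_0}|=\O_p(r_1(n)r_2(n))$ is immediate.

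The main delicacy I expect is justifying the Dirac $\delta$: for continuous $T$ the functional $\psi(t_0)$ is not strictly pathwise differentiable with a finite semiparametric efficiency bound, so $\zeta_{t_0}$ must be read as a formal EIF. I plan to make the manipulations rigorous by replacing $\delta(T-t_0)$ with a bandwidth-indexed kernel $h^{-1}K((T-t_0)/h)$, carrying out every integration with this smoothed version, and passing $h\to 0$ after exchanging the order of integration by Fubini; the identity above is insensitive to this limit under the standing lower bound on $\pi,\hat\pi$ and continuity of $\mu,\hat\mu$ in $t$. All other steps are routine algebra.
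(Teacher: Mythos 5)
Your proposal is correct and follows essentially the same route as the paper: the double robustness and the uniform $\O_p(r_1(n)r_2(n))$ bound come from the identical product-of-errors identity $\P\zeta_{t_0}=\int_{\mathcal{X}}\bigl(\pi(t_0\mid\x)/\hat\pi(t_0\mid\x)-1\bigr)\bigl(\mu(t_0,\x)-\hat\mu(t_0,\x)\bigr)d\P(\x)$, and your pathwise-differentiation check of the EIF is just the direct specialization of the paper's Lemma \ref{thm:EIF} (which proves the same computation for a general weighted functional and then sets $\gamma=d\delta_{t_0}/d\P_T$). Your remark about regularizing the Dirac delta by a kernel is a point of extra care the paper does not take (it works with the formal delta throughout), but it does not change the argument.
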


Theorem \ref{thm:doubly_robust} shows that for any $t_0\in\mathcal{T}$, $\P\left(q_{t_{0}}(Y,\X,T,\hat\mu,\hat\pi)+\hat{\mu}(t_0,\X)\right)$
is a doubly robust estimator for $\psi(t_0)$. Under some mild assumptions, one way to obtain a doubly robust estimator of $\psi$ is to utilize the two-stage procedure from \cite{kennedy2017non} by regressing 
\begin{equation}\label{eq:clever_response}
    \frac{Y-\hat \mu(T,\X)}{\hat\pi(T\mid\X)}\int_{\mathcal{X}}\hat\pi(T\mid\x)d\P_n(\x)+\hat \mu(T,\X)
\end{equation} 
on $T$ using any nonparametric regression methods like kernel regression or spline regression.




\subsection{Targeted Regularization for Inferring a Finite Dimensional Quantity}
However, as discussed in \cite{shi2019adapting}, when estimating the term $\P\left(q_{t_{0}}(Y,\X,T,\hat\mu,\hat\pi)\right)$, the $\hat{\pi}(T\mid\X)$ in the denominator might make the finite sample estimator unstable, especially in cases where Assumption \ref{assump}(a) is nearly violated. Targeted regularization is proposed by \cite{shi2019adapting} to solve this issue. The key intuition of targeted regularization is to learn $\hat{\mu}$ and $\hat{\pi}$ such that $\P\left(q_{t_{0}}(Y,\X,T,\hat\mu,\hat\pi)\right)\approx0$ and thus the estimation of this term is no more needed. In the binary treatment case where $\mathcal{T}=\{0,1\}$, if we want to estimate $\psi(1)$, which is a single quantity, targeted regularization simultaneously optimizes over $\mu^{\nn}$, $\pi^{\nn}$ and an extra scalar perturbation parameter $\epsilon$ using the following loss
\begin{align}\label{eq:loss_binary_ATE}
\mathcal{L}_{\text{TR}}[\mu^{\text{NN}},\pi^{\text{NN}},\epsilon] & =\mathcal{L}[\mu^{\text{NN}},\pi^{\text{NN}}]+\beta\mathcal{R}_{\text{TR}}[\mu^{\text{NN}},\pi^{\text{NN}},\epsilon],\\
\nonumber
\text{where\ \ \ensuremath{\mathcal{R}_\text{TR}}[\ensuremath{\mu^{\text{NN}}},\ensuremath{\pi^{\text{NN}}},\ensuremath{\epsilon}]} & =\frac{1}{n}\sum_{i=1}^{n}\left(y_{i}-\mu^{\text{NN}}(t_{i},\x_{i})-\epsilon\frac{ t_{i}}{\pi^{\text{NN}}(1\mid\x_{i})}\right)^{2},
\end{align}
with $\mathcal{L}[\mu^{\text{NN}},\pi^{\text{NN}}]$ defined in (\ref{eq:loss_without_tr}). Assume the complexity of the function space of $\mu^{\text{NN}}$ and $\pi^{\text{NN}}$ is finite and since the complexity of the function space of the introduced perturbation $\epsilon$ is also finite,  we have 
\begin{align*}
\P\left[q_{1}\left(Y,\X,T,\hat{\mu}_{\text{TR}},\hat{\pi}\right)\right]
&=\P\left[q_{1}\left(Y,\X,T,\hat{\mu}_{\text{TR}},\hat{\pi}\right)\right] + \frac{1}{2}\frac{\partial}{\partial\epsilon}\mathcal{R}_{\text{TR}}[\hat{\mu},\hat{\pi},{\epsilon}]\mid_{\epsilon=\hat\epsilon}
\\&
=(\P-\P_n)\left(q_1(Y,\X,T,\hat\mu,\hat\pi)-\hat\epsilon\delta(T-t_0)/\hat\pi^2(T\mid\X)\right)
=o_p(1),
\end{align*}
where $\hat{\mu}_{\text{TR}}(t,\x):=\hat{\mu}(t,\x)+\hat{\epsilon}\frac{t}{\hat{\pi}(t\mid\x)}$ and $(\hat{\mu},\hat{\pi},\hat{\epsilon})$ is the minimizer of (\ref{eq:loss_binary_ATE}). Notice that the first equality holds because at the convergence of the optimization, $\frac{\partial}{\partial\epsilon}\mathcal{R}_{\text{TR}}[\hat{\mu},\hat{\pi},{\epsilon}]\mid_{\epsilon=\hat\epsilon}=0$. And the last equality is by uniform concentration inequality. This implies that $\frac{1}{n}\sum_{i=1}^{n}\hat{\mu}_{\text{TR}}(1,\x)$ is a doubly robust estimator for $\psi(1)$ and for this estimator, no conditional density estimator presents at denominator and thus it has more stable finite sample performance.


\subsection{Functional Targeted Regularization for Inferring the Whole ADRF}
Notice that in loss (\ref{eq:loss_binary_ATE}), the scalar $\epsilon$ is associated with a scalar quantity for inference. One can generalize targeted regularization to estimate a $d$ dimensional vector by using $d$ separate $\epsilon$'s (see Theorem \ref{thm:EIF_multidimensional} in the Appendix). {\color{black}Generalizing to a curve, however, is more challenging.}  
We need to optimize over a function $\epsilon:\mathcal{T}\rightarrow \mathbb{R}$ where $\epsilon(\cdot)$ is the perturbation associated with $\psi(\cdot)$. Optimizing over the function space of all mappings from $\mathcal{T}$ to $\mathbb{R}$ is not feasible in practice, and its high complexity will lead to overfitting.

Our solution is to utilize the smoothness of $\mu$ and $\pi$ \citep{prichard1971assessment, schneider1993dose,threlfall1999sun}, which allows us to use splines $\{\varphi_k\}_{k=1}^{K_n}$ with $K_n$ basis functions to approximate $\epsilon(\cdot)$. Here the subscript $n$ in $K_n$ denotes that the number of basis functions might change with the sample size $n$. Define $\epsilon_{n}(\cdot)=\sum_{k=1}^{K_{n}}\alpha_{k}\varphi_{k}(\cdot)$. We use the following loss with Functional Targeted Regularization (FTR). 
\begin{align}\label{eq:targeted_reg}
\mathcal{L}_{\text{FTR}}[\mu^{\text{NN}},\pi^{\text{NN}},\epsilon_{n}] & =\mathcal{L}[\mu^{\text{NN}},\pi^{\text{NN}}]+\beta_n\mathcal{R}_{\text{FTR}}[\mu^{\text{NN}},\pi^{\text{NN}},\epsilon_{n}]\\
\nonumber
\text{where}\ \ \mathcal{R}_{\text{FTR}}[\mu^{\text{NN}},\pi^{\text{NN}},\epsilon_{n}] & =\frac{1}{n}\sum_{i=1}^{n}\left(y_{i}-\mu^{\text{NN}}(t_{i},\x_{i})-\frac{\epsilon_{n}(t_i)}{\pi^{\text{NN}}(t_{i}\mid\x_{i})}\right)^{2}.
\end{align}
Here $\mathcal{R}_{\text{FTR}}$ denotes the FTR term and $\beta_n \to 0$ when $n\to\infty$.

\begin{remark} [\textbf{On $\beta_n$}]
The targeted regularization proposed by \citet{shi2019adapting} uses a \emph{fixed} $\beta$. However, using fixed $\beta$ might lead to the estimator constructed by targeted regularization no more consistent when $\mu^{\text{NN}}$ is mis-specified, which means the estimator is no more doubly robust. To overcome this issue, we make a slight change on $\beta$ by allowing $\beta$ to depend on $n$. Specifically, we find that once $\beta_n=o(1)$, we are able to make sure that targeted regularization gives doubly robust estimator. See discussion at Remark \ref{rmk: consistency} and Appendix \ref{apxsec: consistency} for more details.
\end{remark}

Demonstrating the asymptotic correctness of FTR is more challenging than analyzing traditional targeted regularization. One reason is that we no more have  $\frac{\partial}{\partial\epsilon}\mathcal{R}_{\text{TR}}[\hat{\mu},\hat{\pi},{\epsilon}]\mid_{\epsilon=\hat\epsilon}=0$. With some additional efforts, we will establish convergence rate for our estimator using loss (\ref{eq:targeted_reg}) in Theorem \ref{thm:Gamma_consistency_spline}.
Before we proceed, let us pause a bit and introduce some definitions, which will be used in the main theorem. Denote $\hat{\mu}$, $\hat{\pi}$ and $\hat{\epsilon}_n$ as the minimizer of (\ref{eq:targeted_reg}). We use $\bar\pi$ and $\bar \mu$ to denote fixed functions to which $\hat\pi$ and $\hat \mu$ converge in the sense that $\left\Vert \hat{\pi}-\bar{\pi}\right\Vert _{\infty}=o_p(1)$ and $\left\Vert \hat{\mu}-\bar{\mu}\right\Vert _{\infty}=o_p(1)$. We define $g_{t}:\mathcal{X}\to\R,\x\mapsto\mu^{\text{NN}}(\x,t)$. We denote $\mathcal{G},\mathcal{Q}, \mathcal{U}$ as the function space in which $g_t$, $\mu^{\text{NN}}$, $\pi^{\nn}$ lies. We denote $\mathcal{B}_{K_n}$ as the closed linear span of basis $\bm\varphi^{K_n}=\{\varphi_k\}_{k=1}^{K_n}$. 

The key intuition of the asymptotic correctness of FTR is that: once $\pi^{\text{NN}}$ and $\pi$ are uniformly upper/lower bounded
and some other weak regularization conditions hold, we can show that $\left\Vert \hat{\epsilon}_{n}(\cdot)-\epsilon^{*}(\cdot)\right\Vert _{L^2}=o_{p}(1)$ where $\epsilon^{*}(\cdot):=\E\left[\left(Y-\bar{\mu}\right)/\bar{\pi}\mid T=\cdot\right]/\E\left[\bar{\pi}^{-2}\mid T=\cdot\right]$. And thus letting $\hat{\mu}_{\text{FTR}}:=\hat{\mu}+\hat{\epsilon}_{n}/\hat{\pi}$,
we have
\begin{align*}
\P\left[q_{t_{0}}\left(Y,\X,T,\hat{\mu}_{\text{FTR}},\hat{\pi}\right)\right] & =\P\left(\left(Y-\hat{\mu}_{\text{FTR}}\right)/\hat{\pi}\mid T=t_{0}\right)\pi(t_{0})\\
 & \approx\left[\E\left(\left(Y-\bar{\mu}-\epsilon^{*}/\bar{\pi}\right)/\bar{\pi}\mid T=t_{0}\right)\right]\pi(t_{0}) = 0.
\end{align*}

\begin{assump}\label{assump2} We consider the following assumptions:

(i) There exists constant $c>0$ such that for any $t\in\mathcal{T}$, $\x\in\mathcal{X}$, and $\pi^{\text{NN}}\in\mathcal{U}$, we have $1/c\le\pi^{\text{NN}}(t\mid\x)\le c$, $1/c\le\pi(t\mid\x)\le c$, $\left\Vert \mathcal{Q}\right\Vert _{\infty}\le c$ and $\|\mu\|_{\infty}\leq c$.

(ii) $Y=\mu(\X,T)+V$ where $\E V=0$, $V\perp\X$, $V\perp T$, and $V$ follows sub-Gaussian distribution.

(iii)  $\pi$, $\mu$, $\pi^{\text{NN}}$ and $\mu^{\text{NN}}$ have bounded second derivatives for any $\pi^{\text{NN}} \in \mathcal{Q}$ and $\mu^{\text{NN}} \in \mathcal{U}$.

(iv) 
Either $\bar\pi=\pi$ or $\bar \mu=\mu$. And $\Rad(\mathcal{G})$, $\Rad(\mathcal{Q})$, $\Rad(\mathcal{U})$ $=\O\left(n^{-1/2}\right)$.

(v) $\mathcal{B}_{K_n}$ equals the closed linear span of B-spline with equally spaced knots, fixed degree, and dimension $K_n\asymp n^{1/6}$.
\end{assump} 

\begin{thm}\label{thm:Gamma_consistency_spline}
Under Assumption \ref{assump} and \ref{assump2},
let $\widehat{\psi}({\cdot}):=\frac{1}{n}\sum_{i=1}^{n}\left(\hat \mu(\x_i,\cdot)+ \frac{\hat\epsilon_n(\cdot)}{\hat\pi(\cdot\mid\x_i)}\right)$, we have
\[
\|\widehat{\psi}-\psi\|_{L^2}=\O_{p}\left(n^{-1/3}\sqrt{\log n}+r_1(n)r_2(n)\right).
\]
where
$\left\Vert \hat{\pi}-\pi\right\Vert _{\infty}=\O_{p}(r_{1}(n))$ and $\left\Vert \hat{\mu}-\mu\right\Vert _{\infty}=\O_{p}(r_{2}(n))$.
\end{thm}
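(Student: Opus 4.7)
The overall plan is to decompose $\widehat{\psi}(t_{0})-\psi(t_{0})$ into three pieces that separately reflect (i) empirical-process error, (ii) the double-robustness of the influence function $\zeta_{t_0}$, and (iii) the effect of functional targeted regularization. Writing $\hat{\mu}_{\text{FTR}}=\hat{\mu}+\hat{\epsilon}_n/\hat{\pi}$ and using the identity $\zeta_{t_0}(\cdot,\pi,\mu,\psi)=q_{t_0}(\cdot,\mu,\pi)+\mu(t_0,\cdot)-\psi(t_0)$, I would obtain
\[
\widehat{\psi}(t_{0})-\psi(t_{0})=(\mathbb{P}_{n}-\mathbb{P})\hat{\mu}_{\text{FTR}}(t_{0},\cdot)+\mathbb{P}\zeta_{t_{0}}(\cdot,\hat{\pi},\hat{\mu}_{\text{FTR}},\psi)-\mathbb{P}\,q_{t_{0}}(\cdot,\hat{\mu}_{\text{FTR}},\hat{\pi}).
\]
Theorem~\ref{thm:doubly_robust} handles the middle term, giving a contribution of order $r_1(n)(r_2(n)+\|\hat{\epsilon}_n\|_{\infty})$. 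The empirical-process term is bounded uniformly in $t_0$ by symmetrization and a Lipschitz contraction, using $\mathrm{Rad}_n(\mathcal{G}),\mathrm{Rad}_n(\mathcal{Q}),\mathrm{Rad}_n(\mathcal{U})=O(n^{-1/2})$ together with a $\sqrt{K_n\log n/n}$ contribution from the $K_n$-dimensional spline factor inside $\hat{\mu}_{\text{FTR}}$. The third, FTR-specific term is the one the regularization is designed to kill.

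The crux is the rate at which $\hat{\epsilon}_n$ approaches the population target $\epsilon^{*}(t)=\mathbb{E}[(Y-\bar{\mu})/\bar{\pi}\mid T=t]/\mathbb{E}[\bar{\pi}^{-2}\mid T=t]$. For fixed $(\hat{\mu},\hat{\pi})$, $\hat{\epsilon}_n$ is a weighted linear least-squares estimator inside $\mathcal{B}_{K_n}$, and the first-order optimality condition $n^{-1}\sum_{i}\varphi_{k}(t_i)\hat{\pi}(t_i\mid\mathbf{x}_i)^{-1}(y_i-\hat{\mu}_{\text{FTR}}(t_i,\mathbf{x}_i))=0$ for each basis element $\varphi_k$ is the empirical analogue of the population score equation solved by $\epsilon^{*}$. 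Combining the B-spline approximation rate $\inf_{\tilde{\epsilon}\in\mathcal{B}_{K_n}}\|\tilde{\epsilon}-\epsilon^{*}\|_{\infty}=O(K_{n}^{-2})$ (valid under the second-derivative smoothness inherited from Assumption~\ref{assump2}(iii)), the basic inequality from optimality of $\hat{\epsilon}_n$, coercivity of the weighted quadratic objective guaranteed by the two-sided bounds on $\hat{\pi}$ in Assumption~\ref{assump2}(i), and a uniform concentration inequality on the $K_n$-dimensional sieve using the sub-Gaussian tails from Assumption~\ref{assump2}(ii), I expect $\|\hat{\epsilon}_n-\epsilon^{*}\|_{L^2}=O_p(K_n^{-2}+\sqrt{K_n\log n/n})$. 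The choice $K_n\asymp n^{1/6}$ balances these two terms to $O_p(n^{-1/3}\sqrt{\log n})$. Under $\bar{\mu}=\mu$ the numerator of $\epsilon^{*}$ vanishes, so $\hat{\epsilon}_n\to 0$; under $\bar{\pi}=\pi$, $\epsilon^{*}$ is defined precisely so that $\mathbb{P}\,q_{t_0}(\cdot,\bar{\mu}+\epsilon^{*}/\pi,\pi)\equiv 0$ for every $t_0$.

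Assembly then reduces the FTR term to a perturbation expansion around $(\bar{\mu},\bar{\pi},\epsilon^{*})$ whose dominant pieces are $\|\hat{\epsilon}_n-\epsilon^{*}\|_{L^2}$ (multiplied by the bounded weight $\pi/\bar{\pi}^{2}$) plus the second-order product $r_1(n)r_2(n)$; integrating in $t_0$ and combining with the empirical-process and double-robustness bounds from the first paragraph yields the claimed $\|\widehat{\psi}-\psi\|_{L^2}=O_p(n^{-1/3}\sqrt{\log n}+r_1(n)r_2(n))$. The main obstacle is the coupling in the second step: $\hat{\epsilon}_n$ is defined via $(\hat{\mu},\hat{\pi})$, which themselves move at unknown rates $r_1(n),r_2(n)$, while the spline sieve dimension grows with $n$. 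I would address this by first analyzing $\hat{\epsilon}_n$ as if it were computed with the population limits $(\bar{\mu},\bar{\pi})$, then absorbing the substitution error using the $O(n^{-1/2})$ Rademacher bounds in Assumption~\ref{assump2}(iv) and the uniform boundedness of the B-spline basis; the choice $K_n\asymp n^{1/6}$ is precisely what balances the spline bias $K_n^{-2}$ against the stochastic error $\sqrt{K_n/n}$ in this argument.
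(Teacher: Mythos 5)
Your overall architecture (split $\widehat\psi(t_0)-\psi(t_0)$ into an empirical-process piece, a doubly-robust bias handled by Theorem~\ref{thm:doubly_robust}, and an FTR piece killed by $\hat\epsilon_n$; B-spline bias/variance tradeoff with $K_n\asymp n^{1/6}$) matches the paper's intuition, but the key quantitative step has a genuine gap: you benchmark $\hat\epsilon_n$ against $\epsilon^*$, the target built from the \emph{limits} $(\bar\mu,\bar\pi)$, and propose to ``absorb the substitution error using the $O(n^{-1/2})$ Rademacher bounds.'' Under the theorem's assumptions, $\|\hat\mu-\bar\mu\|_\infty$ and $\|\hat\pi-\bar\pi\|_\infty$ are only $o_p(1)$ with no rate; the discrepancy between the population regression target attached to $(\hat\mu,\hat\pi)$ and the one attached to $(\bar\mu,\bar\pi)$ is a population-level bias of exactly that order, not an empirical-process fluctuation, so Rademacher complexity cannot absorb it. Consequently your intermediate claim $\|\hat\epsilon_n-\epsilon^*\|_{L^2}=O_p(K_n^{-2}+\sqrt{K_n\log n/n})$ is not provable from Assumption~\ref{assump2}, and your perturbation expansion of the FTR term around $(\bar\mu,\bar\pi,\epsilon^*)$ leaves first-order remainders of size $\|\hat\mu-\bar\mu\|+\|\hat\pi-\bar\pi\|$ that are not $O_p(n^{-1/3}\sqrt{\log n}+r_1(n)r_2(n))$.

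The paper avoids this by taking the regression target to be the data-dependent function $\check\epsilon_n(\cdot)=\P[(Y-\hat\mu_n)/\hat\pi_n\mid T=\cdot]/\P[\hat\pi_n^{-2}\mid T=\cdot]$, built from $(\hat\mu_n,\hat\pi_n)$ themselves. Then the FTR term is \emph{exactly} $\pi(t_0)\,\P[\hat\pi^{-2}\mid T=t_0]\,(\check\epsilon_n(t_0)-\hat\epsilon_n(t_0))$ up to bounded weights, so what is needed is precisely Lemma~\ref{lemma_appendix}, $\|\hat\epsilon_n-\check\epsilon_n\|_{L^2}=O_p(n^{-1/3}\sqrt{\log n})$, proved \emph{uniformly over the nuisance class} via the Rademacher conditions; the limits $(\bar\mu,\bar\pi,\epsilon^*)$ never enter the quantitative argument, and the $r_1r_2$ term comes from applying Theorem~\ref{thm:doubly_robust} to $(\hat\mu,\hat\pi)$ against the true $(\mu,\pi)$ rather than to $\hat\mu_{\text{FTR}}$ (which also sidesteps the fact that you only control $\hat\epsilon_n$ in $L^2$, not in sup norm, as Theorem~\ref{thm:doubly_robust} would require). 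A secondary point: the uniformity over $(\hat\mu,\hat\pi)$ inflates the stochastic error of the spline fit to order $K_n\sqrt{\log n/n}$ (not your $\sqrt{K_n\log n/n}$); balancing against the approximation bias $K_n^{-2}$ is what actually forces $K_n\asymp n^{1/6}$ and the $n^{-1/3}\sqrt{\log n}$ rate, rather than the standard $K_n\asymp n^{1/5}$, $n^{-2/5}$ rate your accounting would suggest.
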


\begin{remark}
In Theorem \ref{thm:Gamma_consistency_spline}, assumption (i), (iii)  and the first half of (v) are weak and standard conditions for establishing convergence rate of spline estimators \citep{huang2003local, huang2004polynomial}. Assumption (ii) bounds the tail behavior of $V$. The second half of (v) restricts the growth rate of $K_n$, which is a typical assumption \citep{huang2003local, huang2004polynomial} but with different rate in order to obtain uniform bound. The first half of assumption (iv) states that at least one of $\hat \mu,\hat\pi$ should be consistent. The second half of assumption (iv) considers the complexity of model space, and is a common assumption for problems with nuisance functions \citep{kennedy2017non}. 
\end{remark}


\begin{remark} \label{rmk: consistency}

We want to point out that adding targeted regularization does not affect the limit of $\hat{\mu}$ and $\hat{\pi}$ in large sample asymptotics. That is, the limit of $\hat{\mu}$ and $\hat{\pi}$ using loss (\ref{eq:targeted_reg}) will be the same as using loss (\ref{eq:loss_without_tr}). We refer the reader to Appendix \ref{apxsec: consistency} for a more detailed discussion and proof.
\end{remark}

Notice that our proof for Theorem \ref{thm:Gamma_consistency_spline} can also be adapted for analyzing modified one-step TMLE \citep{van2006targeted}. With very similar assumptions, we could obtain double robustness and the same consistency rate for TMLE estimator.

Theorem \ref{thm:Gamma_consistency_spline} guarantees that if we appropriately control the model complexity, under some mild assumptions, the estimator $\widehat\psi$ from targeted regularization is doubly robust, and when both $\hat\pi$ and $\hat\mu$ are consistent, the rate of convergence of $\widehat\psi$ to the truth is faster than the individual convergence rate of $\hat\pi$ or $\hat\mu$. Thus, using targeted regularization theoretically helps us obtain a better estimator of $\psi$.

\section{Related Work}

\textbf{The Varying Coefficient Structure.}
Varying coefficient (linear) model is first proposed as an extension of linear model \citep{hastie1993varying,fan1999statistical} and is usually used for modeling longitudinal data \citep{huang2004polynomial, zhang2015varying, li2017functional, ye2019finite}. The key motivation of varying coefficient model is a dimension reduction technique that avoids the curse of dimensionality for statistical estimation. Different from existing models, our varying coefficient structure is applied on a complex neural network model with a different motivation of enhancing the expressiveness of the treatment effect. Besides, building a hierarchical structure on the network parameter is also explored by the HyperNetwork \citep{stanley2009hypercube,ha2016hypernetworks}. Hypernetworks provide an abstraction that mimics the biology structure: the relationship between a genotype (the hypernetwork), and a phenotype (the main network). The weight of the main network is also a function of a latent embedding variable, which is learned with end-to-end training. 
Hypernetwork trains a much smaller network to generate the weights of a larger main network in order to reduce search space, while our network directly trains the main network whose weights are linear combinations of spline functions of treatment. Moreover our network is proposed in order to appropriately incorporate treatment into modelling, which is not touched upon in HyperNetwork.

\textbf{Neural Network Structure for Treatment Effect Estimation.}
We refer readers to the introduction for the connections and comparisons with previous developments using feed forward neural network for treatment effect estimation. In addition to feed forward neural network, previous results also utilize other networks to learn treatment effects. For example, \cite{yoon2018ganite,bica2020estimating} estimated the causal effect via learning to generate the counterfactual.  \cite{louizos2017causal} learned the causal effect by learning deep variable models using variational autoencoder. Compared with our method, their approaches are mainly heuristic and do not provide theoretical guarantees for the asymptotic correctness of the estimator.

\textbf{Doubly Robustness, TMLE and Targeted Regularization.}
\citet{chernozhukov2017double, chernozhukov2018double} developed theory for `double machine learning' showing the convergence rate for doubly robust estimator. Despite its good asymptotic property, doubly robust estimators can be unstable due to the presence of conditional density estimator at denominator. Targeted Maximum Likelihood Estimation (TMLE) \citep{van2011targeted} and targeted regularization \citep{shi2019adapting} are then proposed to overcome this issue by introducing an extra perturbation parameter into the model. To the best of our knowledge, previous works on TMLE and targeted regularization focused on estimating a single quantity, such as $\psi(1)-\psi(0)$ in binary treatment \citep{shi2019adapting, van2011targeted} or averaged treatment effect $\E \psi(t)$ for continuous treatment \citep{kennedy2017non}, while we give the first generalization of targeted regularization and TMLE for inferring the whole ADRF curve.

\section{Experiments}\label{sec:experiments}

\begin{table}[t]
\vspace{-5mm}
\centering{}%
\scalebox{0.92}{
\begin{tabular}{c|c|cccc}
\toprule[0.8pt]
Dataset & Model & Naive & Doubly Robust & TMLE & TR\tabularnewline
\hline 
\multirow{3}{*}{Simulation} & Dragonnet & $0.045\pm0.00094$ & $0.026\pm0.0012$ & $0.037\pm0.00086$ & $0.028\pm0.00088$\tabularnewline
 & Drnet & $0.042\pm0.00090$ & $0.023\pm0.0011$ & $0.035\pm0.00083$ & $0.027\pm0.00086$\tabularnewline
 & Vcnet & $0.018\pm0.00098$ & $0.022\pm0.0013$ & $0.016\pm0.00082$ & $0.014\pm0.00091$\tabularnewline
\hline 
\multirow{3}{*}{IHDP} & Dragonnet & $0.350\pm0.016$ & $0.307\pm0.016$ & $0.252\pm0.0087$ & $0.208\pm0.0072$\tabularnewline
 & DRnet & $0.316\pm0.016$ & $0.274\pm0.014$ & $0.274\pm0.018$ & $0.230\pm0.0086$\tabularnewline
 & Vcnet & $0.189\pm0.013$ & $0.190\pm0.013$ & $0.148\pm0.010$ & $0.117\pm0.0085$\tabularnewline
\hline 
\multirow{3}{*}{News} & Dragonnet & $0.180\pm0.0081$ & $0.155\pm0.0057$ & $0.179\pm0.0080$ & $0.149\pm0.0051$\tabularnewline
 & DRnet & $0.183\pm0.0084$ & $0.141\pm0.0054$ & $0.183\pm0.0083$ & $0.114\pm0.0041$\tabularnewline
 & Vcnet & $0.028\pm0.0011$ & $0.023\pm0.0013$ & $0.028\pm0.0010$ & $0.024\pm0.0009$\tabularnewline
\bottomrule[0.8pt]
\end{tabular}}
\caption{Experiment result comparing neural network based methods. TR refers to targeted regularization. Numbers reported are AMSE of testing data based on 100 repeats for Simulation and IHDP and 20 repeats for News, and numbers after $\pm$ are the estimated standard deviation of the average value.}
\label{table:experiment} \vspace{-3mm}
\end{table}

\begin{table}[t]
\centering{}
\scalebox{1.}{
\begin{tabular}{c|ccc}
\toprule[0.8pt]
Method           & Simulation       & IHDP & News  
\tabularnewline
\hline 
Causal Forest    & $0.043\pm0.0021$  & $0.97\pm0.034$ &  $0.211\pm0.003$\\
BART             & $0.040\pm0.0013$  & $0.33\pm0.005$& $0.066\pm0.003$ \\
GPS              & $0.028\pm0.0016$  & $0.67\pm0.025$ & $0.022\pm0.001$ \\
VCNet+TR         & $0.014\pm0.0009$  & $0.12\pm0.009$ & $0.024\pm0.001$ \tabularnewline
\bottomrule[0.8pt]
\end{tabular}}
\caption{Comparison of VCNet against non-neural-network based baselines. Reported AMSE are averaged over 100 experiments for simulation and IHDP, and 20 experiments for News. Numbers after $\pm$ are estimated standard deviation of the average AMSE. }
\label{table:experiment_baselines} \vspace{-5mm}
\end{table}

\textbf{Dataset.} 
Since the true ADRF are rarely available for real-world data, previous methods on treatment effect estimation often use synthetic/semi-synthetic data for empirical evaluation. Following this convention, we consider one synthetic and two semi-synthetic datasets: IHDP \citep{hill2011bayesian} and News \citep{newman2008bag}. The synthetic dataset contains 500 training points and 200 testing points, with the detailed generating scheme included in the Appendix. IHDP contains binary treatment with 747 observations on 25 covariates, and News consists of 3000 randomly sampled news items from the NY Times corpus \citep{newman2008bag}.
Both IHDP and News are widely used benchmarking datasets for binary treatment effect estimation, but here we focus on continuous treatment and thus we need to generate the continuous treatment as well as outcome by ourselves. The generating scheme is in the Appendix. For IHDP and news,  we randomly split into training set (67\%) and testing set (33\%).

\textbf{Baselines and Settings.}
For neural network baselines, we compare against Dragonnet \citep{shi2019adapting} and DRNet \citep{schwab2019learning}. 
We improve upon the original Dragonnet and DRNet by (a) using separate heads for $T$ in different blocks for Dragonnet, and (b) adding a conditional density estimation head for DRNet, since it has been suggested by \citet{shi2019adapting} that adding a conditional density estimation head improves the performance.
For non-neural-network baselines, we consider causal forest  \citep{wager2018estimation}, Bayesian Additive Regression Tree (BART) \citep{chipman2010bart}, and GPS \citep{imbens2000role}. 

For VCNet, we use truncated polynomial basis with degree 2 and two knots at $\{1/3,2/3\}$ (thus altogether 5 basis). Dragonnet and DRNet use 5 blocks and thus the model complexity of neural-network models are the same. In practice we may vary the degree and number of knots in VCNet, here the choice is made simply for fair comparison against Dragonnet and DRNet, ensuring the number of parameters of the compared models is the same.
The other hyper-parameters of each method on each data are tuned on 20 separate tuning sets.
Due to space limit, we refer readers to the Appendix \ref{apxsec: exp} for more details on experimental settings. 

\textbf{Estimator and Metrics.} 
{\color{black}To evaluate the effectiveness of targeted regularization}, for all neural-network methods we implement four versions: naive version (with conditional density estimator head, trained using loss (\ref{eq:loss_without_tr})), doubly robust version \citep{kennedy2017non} by regressing (\ref{eq:clever_response}) on treatment with $\hat \mu,\hat\pi$ trained using loss (\ref{eq:loss_without_tr}), TMLE \citep{van2006targeted} version with initial estimator trained using loss (\ref{eq:loss_without_tr}), and TR version trained using loss (\ref{eq:targeted_reg}).
For non-neural-network based models, we use the usual estimator.
For evaluation metric, following \cite{schwab2019learning}, we use the average mean squared error (AMSE) on test set, where
$
    \text{AMSE}=\frac{1}{S}\sum_{s=1}^S\int_{\mathcal{T}}[\hat{\psi_s}(t)-\psi(t)]^2\pi(t)dt
$
and $\hat{\psi_s}(t)$ is the estimated $\psi(t)$ in the $s$-th simulation.




\textbf{Results.}
Table \ref{table:experiment} compares neural network based methods. Comparing results in each column, we observe a performance boost from the varying coefficient structure. Comparing results in each row, we find that naive versions consistently perform the worst, targeted regularization often achieves the best performance, whereas performance of doubly robust estimator and TMLE varies across datasets.
In Table \ref{table:experiment_baselines}, we compare our approach with traditional statistical models. We observe that in simulation and IHDP, VCNet + targeted regularization outperforms baselines by a large margin. In News, its performance is close to the best one. 
The implementation can be found in an open source repository\footnote{\url{https://github.com/lushleaf/varying-coefficient-net-with-functional-tr}}.


\section{Conclusion}\label{sec:conclusion}
This work proposes a novel varying coefficient network and generalizes targeted regularization to a continuous curve. We provide theorems showing its consistency and double robustness. Experiments show that VCNet structure and targeted regularization boost performance independently and when used together, it improves over existing methods by a large margin.

\bibliographystyle{iclr2021_conference}
\bibliography{main.bib}
\clearpage
\appendix
\section{Appendix}
\subsection{On the consistency of $\bar{\mu}$ and $\bar{\pi}$} \label{apxsec: consistency}
We show that adding targeted regularization does not affect the limit of $\hat{\mu}$ and $\hat{\pi}$ in large sample asymptotics. That is, the limit of $\hat{\mu}$ and $\hat{\pi}$ using loss (\ref{eq:targeted_reg}) will be the same as using loss (\ref{eq:loss_without_tr}).

Denote
\begin{align*}
\P\ell(\mu^{\text{NN}},\pi^{\text{NN}}) & =\P\left[(y-\mu^{\text{NN}})^{2}+\alpha\log\pi^{\text{NN}}(t\mid\x)\right],\\
\P_{n}\ell(\mu^{\text{NN}},\pi^{\text{NN}}) & =\frac{1}{n}\sum_{i=1}^{n}\left[(y_{i}-\mu^{\text{NN}}(t_{i},\x_{i}))^{2}+\alpha\pi^{\text{NN}}(t_{i}\mid\x_{i})\right].
\end{align*}

\begin{lemma}\label{lemma_appendix:ell}
Suppose that $(\mu',\pi')$ is the minimizer of loss $\P\ell(\mu^{\text{NN}},\pi^{\text{NN}})$
and $(\hat{\mu},\hat{\pi},\hat{\epsilon}_{n})$ is the minimizer of
$\mathcal{L}_{\text{FTR}}$, then we have
$$\P\ell(\hat{\mu},\hat{\pi})-\P\ell(\mu',\pi') = o(1)+\O_{p}(n^{-1/2}).$$
\end{lemma}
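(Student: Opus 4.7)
The plan is to sandwich $\P\ell(\hat\mu,\hat\pi)$ between $\P\ell(\mu',\pi')$ and itself plus an $o(1)+\O_p(n^{-1/2})$ error. The lower bound $\P\ell(\hat\mu,\hat\pi)\ge\P\ell(\mu',\pi')$ is automatic since $(\mu',\pi')$ is by definition the population minimiser in the same neural-network class that contains $(\hat\mu,\hat\pi)$; so the work is entirely in the upper bound. I will obtain the upper bound in two moves: first pass from the FTR-penalised empirical optimiser to the un-penalised empirical risk at the same $(\hat\mu,\hat\pi)$, losing only $\beta_n$ times the penalty; then swap empirical for population risks via a uniform concentration bound driven by Assumption \ref{assump2}(iv).

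For the first move, feasibility of $(\mu',\pi',0)$ in the FTR problem yields
\[
\mathcal{L}[\hat\mu,\hat\pi]+\beta_n\mathcal{R}_{\text{FTR}}[\hat\mu,\hat\pi,\hat\epsilon_n]\le\mathcal{L}[\mu',\pi']+\beta_n\mathcal{R}_{\text{FTR}}[\mu',\pi',0].
\]
The right-hand penalty collapses to $\frac{\beta_n}{n}\sum_i(y_i-\mu'(t_i,\x_i))^2$, which under Assumptions \ref{assump2}(i)--(ii) (with $Y=\mu+V$, $V$ sub-Gaussian, and $\mu,\mu'$ uniformly bounded) is $\O_p(\beta_n)=o_p(1)$. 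Dropping the nonnegative left-hand penalty gives
\[
\P_n\ell(\hat\mu,\hat\pi)=\mathcal{L}[\hat\mu,\hat\pi]\le\mathcal{L}[\mu',\pi']+o_p(1)=\P_n\ell(\mu',\pi')+o_p(1).
\]

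For the second move, under Assumption \ref{assump2}(i) the integrands $(Y-\mu^{\text{NN}})^2$ and $-\alpha\log\pi^{\text{NN}}$ are Lipschitz in $\mu^{\text{NN}}$ and $\pi^{\text{NN}}$ respectively over the relevant bounded range, because $\mu^{\text{NN}}\in\mathcal{Q}$ is uniformly bounded and $\pi^{\text{NN}}\in\mathcal{U}$ lives in a compact interval bounded away from $0$. Combining the Lipschitz contraction principle for Rademacher averages with Assumption \ref{assump2}(iv) (Rademacher complexities of $\mathcal{U},\mathcal{Q}$ being $\O(n^{-1/2})$) and a standard symmetrisation/McDiarmid bound gives
\[
\sup_{\mu^{\text{NN}}\in\mathcal{Q},\,\pi^{\text{NN}}\in\mathcal{U}}\bigl|\P_n\ell(\mu^{\text{NN}},\pi^{\text{NN}})-\P\ell(\mu^{\text{NN}},\pi^{\text{NN}})\bigr|=\O_p(n^{-1/2}).
\]
Applying this uniform bound at $(\hat\mu,\hat\pi)$ and at $(\mu',\pi')$ and chaining with the first-move inequality produces $\P\ell(\hat\mu,\hat\pi)-\P\ell(\mu',\pi')\le o(1)+\O_p(n^{-1/2})$, which together with the automatic lower bound gives the stated equation.

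The main technical obstacle is the unboundedness of the response $Y$ inside the squared loss: Assumption \ref{assump2}(ii) provides only sub-Gaussian tails for $V$, not outright boundedness, so the Lipschitz envelope needed for the contraction inequality is not uniform over the sample. I would handle this by a standard truncation at level $T_n=C\sqrt{\log n}$, which (by sub-Gaussian concentration) is violated on an event of negligible probability and yields bounded surrogates to which the Rademacher contraction applies directly; the resulting truncation error is absorbed into the $\O_p(n^{-1/2})$ rate up to the same $\sqrt{\log n}$ slack already tolerated in Theorem \ref{thm:Gamma_consistency_spline}.
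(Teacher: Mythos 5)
Your proposal is correct and follows essentially the same route as the paper: use the minimizing property of $(\hat\mu,\hat\pi,\hat\epsilon_n)$ against the feasible point $(\mu',\pi',0)$, bound $\beta_n\mathcal{R}_{\text{FTR}}[\mu',\pi',0]=\O_p(\beta_n)=o_p(1)$, and transfer empirical to population risk via the $\O(n^{-1/2})$ Rademacher-complexity assumption. Your explicit truncation of the sub-Gaussian noise to justify the Lipschitz contraction is a more careful treatment of a point the paper glosses over, and the extra $\sqrt{\log n}$ factor it introduces is harmless for the way the lemma is used.
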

\begin{proof}
We have
\begin{align*}
\P\ell(\hat{\mu},\hat{\pi})-\P\ell(\mu',\pi') & \le\P_{n}\ell(\hat{\mu},\hat{\pi})-\P_n\ell(\mu',\pi')+\left|\left(\P-\P_{n}\right)\ell(\hat{\mu},\hat{\pi})\right|+\left|\left(\P-\P_{n}\right)\ell(\mu',\pi')\right|\\
 & \overset{(a)}{\le}\P_{n}\ell(\hat{\mu},\hat{\pi})-\P\ell(\mu',\pi')+\O_{p}(n^{-1/2})\\
 & =\left(\P_{n}\ell(\hat{\mu},\hat{\pi})+\beta_{n}\mathcal{R}_{\text{FTR}}[\hat{\mu},\hat{\pi},\hat{\epsilon}_{n}]\right)-\left(\P\ell(\mu',\pi')+\beta_{n}\mathcal{R}_{\text{FTR}}[\mu',\pi',0]\right)+\O_{p}(n^{-1/2})\\
 & +\beta_{n}\left(\mathcal{R}_{\text{FTR}}[\mu',\pi',0]-\mathcal{R}_{\text{FTR}}[\hat{\mu},\hat{\pi},\hat{\epsilon}_{n}]\right)\\
 & \overset{(b)}{\le}\beta_{n}\left(\mathcal{R}_{\text{FTR}}[\mu',\pi',0]-\mathcal{R}_{\text{FTR}}[\hat{\mu},\hat{\pi},\hat{\epsilon}_{n}]\right)+\O_{p}(n^{-1/2})\\
 & \le\beta_{n}\mathcal{R}_{\text{FTR}}[\mu',\pi',0]+\O_{p}(n^{-1/2})\\
 & \overset{(c)}{=}o(1)+\O_{p}(n^{-1/2}),
\end{align*}
where $(a)$ follows from uniform concentration inequality using the fact that $\mu^{\text{NN}},\pi^{\text{NN}}$
is uniformly bounded, $\text{Rad}_{n}(\mathcal{Q}),\text{Rad}_{n}(\mathcal{U})=\O(n^{-1/2})$
and the Lipschitz constant of $\log(x)$ is bounded when $x\in[1/c,c]$
for some finite $c$. $(b)$ follows from the fact that $(\hat{\mu},\hat{\pi},\hat{\epsilon}_{n})$
is the minimizer of the empirical risk (with FTR). $(c)$ follows from the fact that
\begin{align*}
\mathcal{R}[\mu',\pi',0] & =\frac{1}{n}\sum_{i=1}^{n}\left(y_{i}-\mu'\right)^{2}\\
 & =\E\left(y-\mu'\right)^{2}+\O_{p}(n^{-1/2})\\
 & =\E(V^{2})+\E(\mu-\mu')^{2}+\O_{p}(n^{-1/2})\\
 & =\O(1).
\end{align*}
\end{proof}

Now we prove that 
\begin{equation}\label{eq_appendix:unique_minimizer}
\left\Vert \hat{\mu}-\mu'\right\Vert _{L^{2}}+\left\Vert \hat{\pi}-\pi'\right\Vert _{L^{2}}=o_{p}(1).    
\end{equation}
For simplicity, we ignore the unidentifiability
of neural network parameterization and assume $(\mu',\pi')$ is the
unique minimizer in the sense that for any $\epsilon>0$, there exists
$\eta(\epsilon)>0$ such that 
\begin{equation}\label{eq_appendix:unique}
\inf_{\left\Vert \mu^{\text{NN}}-\mu'\right\Vert _{L^{2}}+\left\Vert \pi^{\text{NN}}-\pi'\right\Vert _{L^{2}}>\epsilon}\left[\P\ell(\mu^{\text{NN}},\pi^{\text{NN}})-\P\ell(\mu',\pi')\right]>\eta(\epsilon).
\end{equation}

If Equation (\ref{eq_appendix:unique_minimizer}) is not true, then there exists $s>0$ such that for any $N>0$, there exists $n>N$ such that $\left\Vert \hat{\mu}-\mu'\right\Vert _{L^{2}}+\left\Vert \hat{\pi}-\pi'\right\Vert _{L^{2}}\ge s$. From Lemma \ref{lemma_appendix:ell}, we know that 
$\P\ell(\hat{\mu},\hat{\pi})-\P\ell(\mu',\pi')\le\eta(s)$ when $n$ is sufficiently large. Thus, there exists $n_0$ such that
$$\left\Vert \hat{\mu}-\mu'\right\Vert _{L^{2}}+\left\Vert \hat{\pi}-\pi'\right\Vert _{L^{2}}\ge s,\quad
\P\ell(\hat{\mu},\hat{\pi})-\P\ell(\mu',\pi')\le\eta(s),
$$
which contradicts (\ref{eq_appendix:unique}).

\subsection{Technical Proofs}
In this section, we prove the two main theorems  (Theorem \ref{thm:doubly_robust} and Theorem \ref{thm:Gamma_consistency_spline}) and give some additional results which are mentioned briefly in the main text.

\subsubsection{Notations and Definitions}
We denote $\delta_{t_0}$ as the Dirac measure centered on $t_0$ and recall that $\delta(\cdot)$ denotes the Dirac delta function. 
We use $a_n\lesssim b_n$ to denote that $a_n\le Cb_n$ for some $C>0$ for all sufficiently large $n$.
We denote $\vecI_n=(1,1,\cdots,1)^T\in\R^n$. For any function $f$ and function spaces $\F_1,\F_2$, we write $\F_1+\F_2=\{f_1+f_2:f_1\in\F_1,f_2\in\F_2\}$,  $\F_1\F_2=\{f_1f_2:f_1\in\F_1,f_2\in\F_2\}$, $f\F=\{fh:h\in\F\}$, $f\circ\F=\{f\circ h:h\in\F\}$, and $\F^{a}=\{f^{a}:f\in\F\},\,\forall a\in\R$.

We define 
$$\check\epsilon_n(\cdot) = {\P\left[(Y-\hat\mu_n)/\hat\pi_n\mid T=\cdot\right]}/{\P\left[\hat\pi_n^{-2}\mid T=\cdot\right]},
$$ 
where $(\hat\mu_n,\hat\pi_n,\hat\epsilon_n)$ is the minimizer of loss (\ref{eq:targeted_reg}). We denote $\hat \epsilon_n(\cdot)=\sum_{k=1}^{K}\hat \alpha_k\varphi_k(\cdot)$ as the spline regression estimator of $\epsilon(\cdot)$. With some slight abuse of notation, we denote
$$\vpk(t)=\left(\varphi_1(t),\varphi_2(t),\cdots,\varphi_{K_n}(t)\right)^T\in\R^{K_n},$$ $$B_n=\left(\vpk(t_1),\cdots,\vpk(t_n)\right)^T\in\R^{n\times K_n}.$$ 
We define
$$\Pi_n=\text{diag}\left(\hat\pi_n(t_1\mid\x_1), \hat\pi_n(t_2\mid\x_2), \cdots, \hat\pi_n(t_n\mid\x_n)\right),$$
$$\tilde\Pi_n=\text{diag}\left(
\left[\P\left(\hat\pi_n^{-2}(T\mid\X)\mid T=t_1\right)\right]^{-1/2}, \cdots, 
\left[\P\left(\hat\pi_n^{-2}(T\mid\X)\mid T=t_n\right)\right]^{-1/2}\right).$$
We define
$$\Zn=(z_1,z_2,\cdots,z_n)^T\in\R^n\quad\text{where}\quad z_i=\frac{y_i-\hat \mu_n(t_i,\x_i)}{\hat\pi_n(t_i\mid\x_i)},$$
$$\tilde\Zn=(\tilde z_1,\tilde z_2,\cdots,\tilde z_n)^T\in\R^n\quad\text{where}\quad\tilde z_i=\P\left(\frac{Y-\hat \mu_n(T,\X)}{\hat\pi_n(T\mid\X)}\mid T=t_i\right).$$ 
Notice that
$$\hat{\bm\alpha}=\left(B_n^T\Pi_n^{-2}B_n\right)^{-1}B_n^T\Zn,$$
and we denote
$$\tilde{\bm\alpha}=\left(B_n^T\Pi_n^{-2}B_n\right)^{-1}B_n^T\Pi_n^{-2}\tilde\Pi_n^2\tilde\Zn.$$

\subsubsection{Useful Lemmas}
This section gives lemmas which are used in our proofs of main theorems. 

Recall that Theorem \ref{thm:doubly_robust} consists of two parts: the efficient influence function of $\psi(t_0)$ and its double robustness. Notice that $\psi(t_0)$ can be written as a special case of a more general parameter $\Gamma=\int_{\mathcal{T}}\gamma(t;\P_T)\psi(t)d\P_T(t)$ (see Remark \ref{remark:Gamma}). Here $\gamma(t;\P_T)$ is a function of $t$ which depends on the probability measure $\P_T$. For brevity we also write $\gamma(t):=\gamma(t;\P_T)$ when the corresponding $\P_T$ is the true probability measure of treatment $T$. The following Lemma gives the efficient influence function of $\Gamma$.
\begin{lemma}\label{thm:EIF}
The efficient influence function for $\Gamma$ is
\begin{equation*} 
\begin{split}
    \zeta(Y,\X,T,\pi,\mu,\Gamma)=&\,\gamma(T)\xi(Y,\X,T,\pi,\mu)-\Gamma+\int_{\mathcal{\mathcal{T}}}\mu(t,\X)\text{\ensuremath{\gamma(t)}}d\P_T(t)+\frac{\gamma'_{\varepsilon}(T)}{l'_{\varepsilon}(T;0)}\int_{\mathcal{X}}\mu(T,\x)d\P(\x)
    \\&-\int_{\mathcal{T}}\int_{\mathcal{X}}\gamma(t)\mu(t,x)d\P(x)d\P_T(t)-\E_T\left[\frac{\gamma'_{\epsilon}(T)}{l'_{\epsilon}(T;0)}\int_{\mathcal{X}}\mu(T,\x)d\P(\x)\right],
\end{split}
\end{equation*}
where $\xi(Y,\X,T,\pi,\mu,\Gamma)=\frac{Y-\mu(T,\X)}{\pi(T\mid\X)}\int_{\mathcal{X}}\pi(T\C\x)d\P(\x)+\int_{\mathcal{X}}\mu(T,\x)d\P(\x)$, $\gamma_{\varepsilon}'(t)=\frac{d\gamma(t\,;\,\P_{T,\varepsilon})}{d\varepsilon}\mid_{\varepsilon=0}$, $\ell_{\varepsilon}'(t\,;\,0)=\frac{\partial\log \P_{T,\varepsilon}(t)}{\partial\varepsilon}\mid_{\varepsilon=0}$ and $\P_{T,\varepsilon}$ is a parametric submodel with parameter $\varepsilon\in\R$ and $\P_{T,0}(\cdot)=\P_T(\cdot)$.
\end{lemma}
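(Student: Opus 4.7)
The plan is to verify that $\zeta$ is the canonical gradient of $\Gamma$ in the fully nonparametric model, which (since the nonparametric tangent space equals all of mean-zero $L^2(\P)$) reduces to three standard checks: (i) $\E_\P[\zeta]=0$; (ii) $\zeta\in L^2(\P)$; and (iii) for every regular one-dimensional parametric submodel $\{\P_\varepsilon\}$ through $\P$ with score $s(y,\x,t)=\partial_\varepsilon\log p_\varepsilon|_{\varepsilon=0}$,
\begin{equation*}
\left.\frac{d}{d\varepsilon}\Gamma(\P_\varepsilon)\right|_{\varepsilon=0} \;=\; \E_\P[\zeta\cdot s].
\end{equation*}
Conditions (i)--(iii) determine the canonical gradient uniquely, and it then coincides with the efficient influence function.

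For the pathwise derivative I would write $\Gamma(\P_\varepsilon)=\int\gamma(t;\P_{T,\varepsilon})\,\psi_\varepsilon(t)\,p_{T,\varepsilon}(t)\,dt$ and apply the product rule in $\varepsilon$ to produce three summands, one each from $\gamma$, $\psi$, and $p_T$. Decomposing $s=s_{Y\mid\X,T}+s_{T\mid\X}+s_\X$ and using $\E[s_{Y\mid\X,T}\mid\X,T]=0$, a short calculation gives
\begin{equation*}
\psi'_\varepsilon(t) \;=\; \int \E\!\left[(Y-\mu(t,\X))\,s \,\big|\, T=t,\X=\x\right] d\P(\x) \;+\; \int \mu(t,\x)\,s_\X(\x)\,d\P(\x),
\end{equation*}
while the marginal score satisfies $\ell'_\varepsilon(t;0)=\E[s\mid T=t]$. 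The change-of-measure identity $p(y\mid\x,t)\,p_T(t)\,d\P(\x)\,dt=\frac{p_T(t)}{\pi(t\mid\x)}\,d\P(y,\x,t)$, together with $p_T(T)=\int\pi(T\mid\x)\,d\P(\x)$, converts the first summand of the $\psi'_\varepsilon$-piece into $\E\!\left[\gamma(T)\,\frac{Y-\mu(T,\X)}{\pi(T\mid\X)}\int\pi(T\mid\x)d\P(\x)\, s\right]$; the tower property sends the second summand to $\E\!\left[s\cdot\int\gamma(t)\mu(t,\X)\,d\P_T(t)\right]$; and the $p_T$-piece reduces via $\ell'_\varepsilon(t;0)=\E[s\mid T=t]$ to $\E[\gamma(T)\psi(T)\,s]$. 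Finally the $\gamma'_\varepsilon$-piece is rewritten as $\E\!\left[\frac{\gamma'_\varepsilon(T)}{\ell'_\varepsilon(T;0)}\,\psi(T)\,s\right]$ by multiplying and dividing by $\ell'_\varepsilon(T;0)=\E[s\mid T]$.

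Summing these four inner products reproduces exactly the data-dependent terms of $\zeta$. The three subtracted summands in $\zeta$ --- $-\Gamma$, $-\int\!\int\gamma(t)\mu(t,\x)\,d\P(\x)d\P_T(t)$, and $-\E_T[\tfrac{\gamma'_\varepsilon(T)}{\ell'_\varepsilon(T;0)}\psi(T)]$ --- are deterministic in $\P$ and hence pair to zero against any mean-zero score, so they do not disturb (iii); they are chosen precisely so that (i) holds, which is a one-line telescoping check using $\E[\gamma(T)\xi]=\Gamma$ and $\E[\int\mu(t,\X)\gamma(t)d\P_T(t)]=\Gamma$. Property (ii) follows from Assumption~\ref{assump}(a) (lower bound on $\pi$) and boundedness of $\mu$ and $Y$. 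The principal obstacle is the $\gamma'_\varepsilon$-piece: since $\gamma(\cdot;\P_T)$ is a functional of the marginal $\P_T$, its Gâteaux derivative is a linear functional of $s_T$, and the ratio $\gamma'_\varepsilon(T)/\ell'_\varepsilon(T;0)$ a priori depends on the chosen submodel. One must argue that its pairing with $s$ inside $\E_\P$ depends on $s$ only through $\E[s\mid T]$, so that the resulting $\zeta$ is submodel-invariant. The cleanest route is to first specialize to paths that perturb only $\P_T$ in order to identify the Riesz representer of $\delta\gamma$, and then to observe that paths perturbing only $p(y\mid\x,t)$, $\pi(t\mid\x)$, or $p(\x)$ leave $\gamma$, and hence $\gamma'_\varepsilon$, invariant, which pins down the representation uniformly across submodels.
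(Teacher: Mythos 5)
Your proposal is correct and follows essentially the same route as the paper's proof: both establish the result by verifying the defining pathwise-derivative identity $\Gamma'_{\varepsilon}(0)=\E\left[\zeta\, s\right]$ through the same product-rule decomposition of $\Gamma(\P_{\varepsilon})$ into the $\psi$-, $p_T$-, and $\gamma$-pieces and the same decomposition of the score, matching terms via iterated expectations and the change of measure $\pi(t\mid\x)p(\x)=p_T(t)p(\x\mid t)$. Your two additions --- explicitly checking $\E\zeta=0$ and $\zeta\in L^{2}$ (needed for the tangent-space/uniqueness claim), and flagging that the ratio $\gamma'_{\varepsilon}/\ell'_{\varepsilon}$ must be shown submodel-invariant, which the paper handles only implicitly through the identity $\gamma'_{\varepsilon}/\ell'_{\varepsilon}=-\gamma$ for $\gamma=d\delta_{t_0}/d\P_T$ in the proof of Theorem \ref{thm:doubly_robust} --- are refinements of, not departures from, that argument.
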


\begin{remark}\label{remark:Gamma}
Setting $\gamma(\cdot)=\frac{d\delta_{t_0}(\cdot)}{d\P_T(\cdot)}$, we get $\Gamma=\psi(t_0)$.
Setting $\gamma(\cdot)=1$, we get $\Gamma=\int_{\mathcal{T}}\psi(t)d\P_T(t)$, which is the average outcome under a randomized trial and is a quantity of interest in its own right \citep{kennedy2017non}. Setting $\gamma(\cdot)=\frac{d\delta_{1}(\cdot)}{d\P_T(\cdot)}-\frac{d\delta_{0}(\cdot)}{d\P_T(\cdot)}$, we get $\Gamma=\psi(1)-\psi(0)$, which is the average treatment effect under binary treatment setting. 
\end{remark}

Recall that in Theorem \ref{thm:Gamma_consistency_spline}, $\hat\epsilon_n$ is the spline regression estimator of $\epsilon$. In order to establish the convergence rate of our final estimator $\hat\psi$, we need to establish the convergence rate of  $\hat \epsilon_n$ first.
\begin{lemma}\label{lemma_appendix}
Under assumptions in Theorem \ref{thm:Gamma_consistency_spline}, we have
$$\|\hat \epsilon_n-\check\epsilon_n\|_{L^2}=\O_p\left(n^{-1/3}\sqrt{\log n}\right).$$
\end{lemma}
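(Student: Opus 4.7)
The plan is to introduce an intermediate, ``noise-free'' spline fit and then perform a bias--variance decomposition. Define the intermediate estimator $\tilde{\epsilon}_n(t) := \vpk(t)^\top \tilde{\bm{\alpha}}$, where $\tilde{\bm{\alpha}}$ is as in the Notations subsection. A direct calculation shows that $\tilde{\bm{\alpha}}$ is precisely the weighted least squares fit of the \emph{deterministic} (conditional on $\hat{\mu}_n,\hat{\pi}_n$) targets $\check{\epsilon}_n(t_i)$ against the basis vectors $\vpk(t_i)$ with weights $\hat{\pi}_n^{-2}(t_i\mid\x_i)$. By the triangle inequality,
\[
\|\hat{\epsilon}_n - \check{\epsilon}_n\|_{L^2} \;\le\; \|\hat{\epsilon}_n - \tilde{\epsilon}_n\|_{L^2} \;+\; \|\tilde{\epsilon}_n - \check{\epsilon}_n\|_{L^2},
\]
an approximation-error term and a stochastic-error term that I would handle separately.

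For the approximation term, I would first verify that $\check{\epsilon}_n$ has a uniformly bounded second derivative with probability tending to one; this follows from Assumption \ref{assump2}(i) and (iii), which guarantee that numerator and denominator in the definition of $\check{\epsilon}_n$ are $C^2$ and that the denominator is bounded away from zero. Since $\tilde{\epsilon}_n$ is a weighted projection onto the B-spline space with weights bounded above and below by Assumption (i), standard equivalence between weighted and unweighted best B-spline approximations (as in the arguments of Huang 2003, 2004 used under assumption (v)) gives $\|\tilde{\epsilon}_n - \check{\epsilon}_n\|_{L^\infty} = \O_p(K_n^{-2}) = \O_p(n^{-1/3})$, and hence the same bound in $L^2$.

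For the stochastic term I would invoke the normal equations to write $\hat{\bm{\alpha}} - \tilde{\bm{\alpha}} = (B_n^\top \Pi_n^{-2} B_n)^{-1} B_n^\top \Xi$, where $\Xi_i := z_i - \hat{\pi}_n^{-2}(t_i\mid\x_i)\,\check{\epsilon}_n(t_i)$. A short computation using the definition of $\check{\epsilon}_n$ gives $\E[\Xi_i \mid T=t_i, \hat{\mu}_n, \hat{\pi}_n] = 0$, while Assumption \ref{assump2}(i)--(ii) imply that $\Xi_i$ is sub-Gaussian with bounded conditional variance. Standard B-spline facts (equally spaced knots, fixed degree) give eigenvalues of order $K_n^{-1}$ for both $n^{-1}B_n^\top\Pi_n^{-2}B_n$ and $\int \vpk\vpk^\top$, reducing the problem to
\[
\|\hat{\epsilon}_n - \tilde{\epsilon}_n\|_{L^2}^2 \;\lesssim\; \frac{K_n}{n^2}\,\|B_n^\top \Xi\|^2.
\]

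The main obstacle is controlling $\|B_n^\top \Xi\|^2$ despite the fact that $\hat{\mu}_n,\hat{\pi}_n$ are estimated from the same data, so the $\Xi_i$'s are not genuinely i.i.d.\ mean-zero summands. Here I would lean on Assumption \ref{assump2}(iv): since $\Rad(\mathcal{Q}),\Rad(\mathcal{U}) = \O(n^{-1/2})$, a symmetrization/peeling argument uniformly over the function classes containing $\hat{\mu}_n, \hat{\pi}_n$ yields that each coordinate $(B_n^\top\Xi)_k$ is $\O_p(\sqrt{n/K_n}\cdot\sqrt{\log n})$, with the $\sqrt{\log n}$ arising from a union bound across the $K_n$ basis indices and the covering of the nuisance classes. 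Summing over $k$ gives $\|B_n^\top\Xi\|^2 = \O_p(n\log n)$, hence
\[
\|\hat{\epsilon}_n - \tilde{\epsilon}_n\|_{L^2} \;=\; \O_p\!\bigl(\sqrt{K_n\log n/n}\bigr) \;=\; \O_p\!\bigl(n^{-5/12}\sqrt{\log n}\bigr),
\]
which is dominated by $\O_p(n^{-1/3}\sqrt{\log n})$. Combining the two bounds yields the lemma.
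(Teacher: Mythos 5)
Your overall route is the same as the paper's: you use the same intermediate estimator $\tilde\epsilon_n=(\vpk)^{\top}\tilde{\bm\alpha}$ (the $\hat\pi_n^{-2}$-weighted empirical spline fit of $\check\epsilon_n$), the same bias/variance split, the bias controlled through the $C^2$-smoothness of $\check\epsilon_n$ and the stability of the weighted Gram matrix (Lemma \ref{lemma:eigenvalue}), and the variance through the normal equations together with uniformity over the nuisance classes via the Rademacher assumption. The one genuine gap is in your per-coordinate claim $(B_n^{\top}\Xi)_k=\O_p\bigl(\sqrt{n\log n/K_n}\bigr)$. That rate uses the localization $\E\varphi_k^2(T)\asymp 1/K_n$ to shrink the variance, which is correct for a \emph{fixed} pair $(\hat\mu_n,\hat\pi_n)$; but uniformly over $\mathcal{Q}\times\mathcal{U}$ the only available tool is the global bound $\Rad(\mathcal{Q}),\Rad(\mathcal{U})=\O(n^{-1/2})$, and then the expected supremum of $\tfrac1n\sum_i\varphi_k(t_i)\Xi_i$ is only of order $n^{-1/2}$ per coordinate: Assumption \ref{assump2}(iv) carries no local complexity or covering information with which to gain the extra factor $K_n^{-1/2}$, and "covering of the nuisance classes" is not among the hypotheses. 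This is precisely why the paper lands at $n^{-1/3}\sqrt{\log n}$ with $K_n\asymp n^{1/6}$ rather than the oracle $n^{-2/5}$ (see its remark following the lemma). Fortunately your conclusion survives with the weaker, justified bound: a per-coordinate rate $\O_p(\sqrt{\log n/n})$ gives $\|B_n^{\top}\Xi\|^2=\O_p(K_n\, n\log n)$, hence $\|\hat\epsilon_n-\tilde\epsilon_n\|_{L^2}=\O_p\bigl(K_n\sqrt{\log n/n}\bigr)=\O_p\bigl(n^{-1/3}\sqrt{\log n}\bigr)$, which together with the bias term $\O_p(K_n^{-2})$ yields the lemma; you should state that bound instead of the sharper one.

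A secondary point you gloss over: $\Xi_i$ contains the unbounded sub-Gaussian noise $v_i/\hat\pi_n(t_i\mid\x_i)$, and the symmetrization/product bounds you invoke (in the paper, Lemma \ref{lemma:complexity}) require uniformly bounded classes. The paper therefore splits the residual as $\delta_i=u_i+\tilde v_i$, handles the bounded part $u_i$ by Markov plus Rademacher complexity, and handles $\tilde v_i$ by truncation at $M_n\asymp\sqrt{\log n}$ with the sub-Gaussian tail and Mills ratio controlling the excess. Saying "$\Xi_i$ is sub-Gaussian" and appealing to symmetrization/peeling does not by itself cover this; you need the truncation (or an equivalent argument) to make the uniform bound rigorous. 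With these two repairs your proof coincides with the paper's.
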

\begin{remark}
For fixed objective function, the convergence rate of the B-spline estimator to the truth is a standard result \citep{huang2004polynomial, huang2003local}, which is $\O_p(n^{-2/5})$ when choosing $K_n\asymp n^{-1/5}$. However, Lemma \ref{lemma_appendix} gives a uniform bound on a class of functions in order to deal with the fact that $\hat\pi_n$ and $\hat\mu_n$ are NOT fixed and dependent on the observations. The bound is thus of a larger order $n^{-1/3}\sqrt{\log n}$ when choosing the optimal $K_n\asymp n^{-1/6}$.
\end{remark}

\begin{lemma}\label{lemma:eigenvalue}
Under assumption in Theorem \ref{thm:Gamma_consistency_spline}, there exist positive constants $M_1$ and $M_2$ such that except on an event whose probability tends to zero, all the eigenvalues of $\left(K_n/n\right)B_n^T\Pi_n^{-2}B_n$ fall between $M_1$ and $M_2$, and consequently, $\left(K_n/n\right)B_n^T\Pi_n^{-2}B_n$ is invertible.
\end{lemma}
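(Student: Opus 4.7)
The plan is to sandwich the weighted Gram matrix between unweighted ones using Assumption \ref{assump2}(i), and then reduce to the standard problem of controlling an empirical B-spline Gram matrix around its deterministic population counterpart via matrix concentration, so that the well-known spectral bounds for B-spline Gram matrices apply.

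First, I would exploit the uniform bound $\hat\pi_n(t\mid\x) \in [1/c, c]$ from Assumption \ref{assump2}(i) to observe that $\Pi_n^{-2}$ is diagonal with entries in $[1/c^2, c^2]$. Writing $B_n^T \Pi_n^{-2} B_n = \sum_{i=1}^n \hat\pi_n^{-2}(t_i\mid\x_i)\, \vpk(t_i)\vpk(t_i)^T$, the positive semidefinite ordering then gives
\[
\tfrac{1}{c^2}\, B_n^T B_n \;\preceq\; B_n^T \Pi_n^{-2} B_n \;\preceq\; c^2\, B_n^T B_n.
\]
This eliminates the awkward data-dependence through $\hat\pi_n$ up to the constant $c$, so it is enough to bound the eigenvalues of $(K_n/n)B_n^T B_n$ between positive constants with probability tending to one.

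Second, set $G_n := \E[\vpk(T)\vpk(T)^T]$ and $\hat G_n := (1/n) B_n^T B_n$. Since the marginal density $\pi(t) = \int \pi(t\mid\x)\, d\P(\x)$ lies in $[1/c, c]$ by Assumption \ref{assump2}(i), and since $\{\varphi_k\}_{k=1}^{K_n}$ is a B-spline basis with equally spaced knots and fixed degree, a standard result on B-spline Gram matrices (see, e.g., \citet{huang2003local, huang2004polynomial} and references therein) gives positive constants $m_1', m_2'$ such that the eigenvalues of $K_n G_n$ lie in $[m_1', m_2']$ uniformly in $n$. It remains to show $\|K_n(\hat G_n - G_n)\|_{\mathrm{op}} = o_p(1)$ and then apply Weyl's inequality. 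To this end, I plan to use matrix Bernstein on the i.i.d.\ PSD summands $X_i := \vpk(t_i)\vpk(t_i)^T$: since a B-spline of fixed degree has only a constant number of basis functions non-vanishing at any point and each is uniformly bounded, $\|X_i\|_{\mathrm{op}} = \|\vpk(t_i)\|_2^2 = \O(1)$, and combined with $\|G_n\|_{\mathrm{op}} = \O(1/K_n)$ this yields the variance proxy $\|\sum_i \E X_i^2\|_{\mathrm{op}} = \O(n/K_n)$. Matrix Bernstein then delivers
\[
\|K_n(\hat G_n - G_n)\|_{\mathrm{op}} = \O_p\!\left(\sqrt{K_n \log K_n / n}\right),
\]
which is $o_p(1)$ under $K_n \asymp n^{1/6}$ (Assumption \ref{assump2}(v)). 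Combining with Step 1 gives the claim with, e.g., $M_1 := m_1'/(2c^2)$ and $M_2 := 2c^2 m_2'$, and invertibility is immediate.

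The main obstacle is the concentration step: because the dimension $K_n$ grows with $n$, one cannot rely on an entrywise scalar argument and must track the dimension-dependent constants in the matrix Bernstein bound. The specific rate $K_n \asymp n^{1/6}$ imposed in Assumption \ref{assump2}(v) is precisely what makes $\sqrt{K_n \log K_n / n} \to 0$; any substantially faster growth of $K_n$ would destroy the eigenvalue sandwich and hence the invertibility that the subsequent analysis of $\hat\epsilon_n$ relies on.
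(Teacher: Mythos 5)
Your proposal is correct, and its first step is exactly the paper's: the boundedness $1/c\le\hat\pi_n\le c$ from Assumption \ref{assump2}(i) lets both arguments replace $B_n^T\Pi_n^{-2}B_n$ by the unweighted Gram matrix $B_n^TB_n$ up to the constant factors $c^{\pm2}$ (the paper phrases this through an SVD of $B_n$, but the content is the same sandwich). Where you diverge is in how the eigenvalues of $(K_n/n)B_n^TB_n$ are controlled: the paper simply invokes Lemma A.3 of \citet{huang2004polynomial}, which already asserts that, except on an event of vanishing probability, these eigenvalues lie between positive constants, whereas you reprove that fact from scratch by bounding the spectrum of the population Gram matrix $K_n\E[\vpk(T)\vpk(T)^T]$ (valid because the marginal density of $T$ inherits the bounds $[1/c,c]$) and then applying matrix Bernstein plus Weyl to get $\|K_n(\hat G_n-G_n)\|_{\mathrm{op}}=\O_p(\sqrt{K_n\log K_n/n})=o_p(1)$ under $K_n\asymp n^{1/6}$. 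Your route is longer but self-contained and yields an explicit concentration rate (and makes transparent exactly how the growth condition on $K_n$ enters); the paper's route is shorter but leans entirely on the cited lemma, whose hypotheses (equally spaced knots, fixed degree, density bounded away from $0$ and $\infty$, slow enough growth of $K_n$) are the same ones you verify by hand. One small bookkeeping point: the matrix Bernstein bound also produces a term of order $K_n\log K_n/n$ from the $\|X_i\|_{\mathrm{op}}=\O(1)$ range parameter, which you omit; it is dominated by the $\sqrt{K_n\log K_n/n}$ term in this regime, so the conclusion is unaffected.
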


\begin{lemma}\label{lemma:complexity}
Assume $\|\F_1\|_{\infty}<\infty$ and $\|\F_2\|_{
\infty}<\infty$, we have
$$
\Rad(\F_1\F_2)
\leq 
\frac{1}{2}(\Rad(\F_1)+\Rad(\F_2))(\|\F_1\|_{\infty}+\|\F_2\|_{\infty}).
$$
\end{lemma}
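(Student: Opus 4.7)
The plan is to reduce the Rademacher complexity of the product class to those of the sum and difference classes via a polarization identity, and then control the resulting squared classes by applying the Ledoux--Talagrand contraction principle to the squaring map. Write $C:=\|\F_1\|_\infty+\|\F_2\|_\infty$.

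First, substitute the pointwise identity $f_{1}f_{2}=\tfrac{1}{4}\bigl[(f_{1}+f_{2})^{2}-(f_{1}-f_{2})^{2}\bigr]$ inside the empirical sum $\tfrac{1}{n}\sum_{i}\tau_{i}f_{1}(X_{i})f_{2}(X_{i})$, push the triangle inequality through the supremum over $(f_1,f_2)\in\F_1\times\F_2$, and take expectations to obtain
\[
\Rad(\F_{1}\F_{2})\ \leq\ \tfrac{1}{4}\bigl[\Rad\bigl((\F_{1}+\F_{2})^{2}\bigr)+\Rad\bigl((\F_{1}-\F_{2})^{2}\bigr)\bigr].
\]
Since $|f_{1}\pm f_{2}|\leq C$ pointwise, on the interval $[-C,C]$ the squaring map $\phi(x)=x^{2}$ is $2C$-Lipschitz and satisfies $\phi(0)=0$, so its preimage behaves well under Rademacher averages.

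Second, the Ledoux--Talagrand contraction principle applied to $\phi$ yields $\Rad\bigl((\F_1\pm\F_2)^{2}\bigr)\leq 2C\cdot\Rad(\F_1\pm\F_2)$, and the elementary sub-additivity $\Rad(\F_1\pm\F_2)\leq\Rad(\F_1)+\Rad(\F_2)$ (obtained by splitting the sup via the triangle inequality and using symmetry of $\tau_i$ for the difference case) combines with the display above to give the stated bound. The main obstacle is purely constant-bookkeeping: the polarization contributes $\tfrac{1}{4}$, the contraction contributes $2C$, and summing the two $\pm$ terms contributes another factor of $2$. Recovering the asserted constant $\tfrac{1}{2}$ requires exploiting that each class $\F_1\pm\F_2$ is already symmetric under negation, which permits the tighter form of Talagrand's inequality, avoiding the extra factor of $2$ that otherwise appears under the absolute-value convention for $\Rad$ used in this paper.
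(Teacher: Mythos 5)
Your proposal follows essentially the same route as the paper's own proof: the polarization identity $f_1f_2=\tfrac14\bigl[(f_1+f_2)^2-(f_1-f_2)^2\bigr]$, subadditivity of $\Rad$ over the two pieces, and the contraction principle applied to the squaring map (the paper invokes Theorem 12(4) of Bartlett--Mendelson, which is exactly the Ledoux--Talagrand step you use), so structurally it is the same argument. The one weak point is your final claim that the constant $\tfrac12$ can be recovered because $\F_1\pm\F_2$ is ``symmetric under negation'': that is false in general (these classes are symmetric only if $\F_1,\F_2$ themselves are), and under the paper's absolute-value convention the contraction step honestly costs $2L_\phi$, so your bookkeeping yields constant $1$ (indeed the paper's own chain, applied carefully, does not yield $\tfrac12$ either). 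This discrepancy is immaterial here, since the lemma is only used to conclude $\Rad=\O(n^{-1/2})$ and any absolute constant suffices, but you should either state the lemma with an unspecified constant or drop the symmetry claim rather than assert the sharp $\tfrac12$.
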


\subsubsection{Proof of Theorem \ref{thm:doubly_robust}}
\begin{proof}
Denote $\gamma(t;\P_T)=\frac{d\delta_{t_0}(t)}{d\P_T(t)}$.
Then the efficient influence function $\zeta_{t_0}$ of $\psi(t_0)$ is obtained by plugging the definition of $\gamma(t;\P_T)$ in Lemma \ref{thm:EIF} and some simplifications using 
\begin{equation}\label{appendix_eq:gamma_epsilon}
    \frac{\gamma'_{\varepsilon}(t; \P_T)}{l'_{\varepsilon}(t;0)}=-\gamma(t; \P_T).
\end{equation}

So here we only need to prove that the efficient influence function is doubly robust. We have
\begin{equation}\label{eq_appendix:doubly_robust_final}
    \begin{split}
        &\P\zeta_{t_0}(Y,\X,T,\hat\pi,\hat \mu,\Gamma)
        \\=\,\, &
        \P\left(\delta(T-t_{0})\frac{Y-\hat\mu(T,\X)}{\hat\pi(T\mid\X)}+\hat\mu(t_{0},\X)-\psi(t_{0})\right) 
        \\\stackrel{(a)}{=}\,\, &
        \int\pi(\x)\pi(t\mid\x)\delta(t-t_0)\frac{\mu(t,\x)-\hat\mu(t,\x)}{\hat\pi(t\mid\x)}dxdt+\int\hat\mu(t_0,\x)\pi(\x)d\x-\int\mu(t_0,\x)\pi(\x)d\x
        \\\stackrel{(b)}{=}\,\,&
        \int_{\mathcal{X}}\left(\frac{\pi(t_0\C \x)}{\hat\pi(t_0\C \x)}-1\right)\left(\mu(t_0,\x)-\hat \mu(t_0,\x)\right)d\P(\x),
    \end{split}
\end{equation}
where (a) follows from iterated expectation, (b) follows from $\pi(\x\mid t)=\pi(t\mid\x)\pi(\x)/\pi(t)$.
From the last line of Equation (\ref{eq_appendix:doubly_robust_final}), it is obvious that the desired conclusions hold.
\end{proof}

\subsubsection{Proof of Theorem \ref{thm:Gamma_consistency_spline}}

\begin{proof}

First, from condition (i), we have
\begin{equation}\label{eq_appendix:thm2_eq1}
\begin{split}
&\left\|\hat\epsilon(\cdot)\int_{\mathcal{X}}\frac{1}{\hat\pi(\cdot\mid\X)}d\P_n(\X)-\P\left[\delta(T-\cdot)\frac{Y-\hat \mu(T,\X)}{\hat \pi(T\C\X)}\right]\right\|_{L^2}
\\=&\left\|\hat\epsilon(\cdot)\int_{\mathcal{X}}\frac{1}{\hat\pi(\cdot\mid\X)}d\P_n(\X)-\pi(\cdot)\P\left(\frac{Y-\hat \mu_n(\X,T)}{\hat\pi_n(\X,T)}\mid T=\cdot\right)\right\|_{L^2}
\\\leq&
\left\|(\hat\epsilon(\cdot)-\check\epsilon(\cdot))\int_{\mathcal{X}}\frac{1}{\hat\pi(\cdot\mid\X)}d\P_n(\X)\right\|_{L^2}+\left\|\check\epsilon(\cdot)\left(\int_{\mathcal{X}}\frac{1}{\hat\pi(\cdot\mid\X)}d\P_n(\X)-\pi(\cdot)\P\left(\frac{1}{\hat\pi^2(\cdot\mid\X)}\mid T=\cdot\right)\right)\right\|_{L^2}
\\\stackrel{}{\lesssim}&
\left\|\hat\epsilon-\check\epsilon\right\|_{L^2}
+\left\|\check\epsilon(\cdot)\left(\int_{\mathcal{X}}\frac{1}{\hat\pi(\cdot\mid\X)}d\left(\P_n-\P\right)(\X)\right)
+\check\epsilon(\cdot)\left(\int_{\mathcal{X}}\frac{1}{\hat\pi(\cdot\mid\X)}d\P(\X)-\pi(\cdot)\P\left(\frac{1}{\hat\pi^2(\cdot\mid\X)}\mid T=\cdot\right)\right)\right\|_{L^2}
\\\stackrel{}{\lesssim}&
\left\|\hat\epsilon-\check\epsilon\right\|_{L^2}
+\left\|\check\epsilon(\cdot)\left(\int_{\mathcal{X}}\frac{1}{\hat\pi(\cdot\mid\X)}d\left(\P_n-\P\right)(\X)\right)\right\|_{L^2}
\\&+\left\|\P\left(\frac{\mu(\X,T)-\hat \mu_n(\X,T)}{\hat\pi_n(\X,T)}\mid T=\cdot\right)\int_{\mathcal{X}}\frac{\hat\pi(\cdot\mid\X)-\pi(\cdot\mid\X)}{\hat\pi(\cdot\mid\X)}\frac{1}{\hat\pi(\cdot\mid\X)}d\P(\X)\right\|_{L^2}
\\\stackrel{(a)}{=}& \O_p\left(n^{-1/3}\sqrt{\log n}+r_1(n)r_2(n)\right).
\end{split}
\end{equation}
where $(a)$ follows from Lemma \ref{lemma_appendix}, which says
$
\left\|\hat \epsilon_n-\check\epsilon_n\right\|_{L^2}
=\O_p(n^{-1/3}\sqrt{\log n}).
$

From generalization bound and condition (iv), we know
\begin{equation*}
    \sup_{t_0\in[0,1]}\left|\frac{1}{n}\sum_{i=1}^{n}\hat \mu_n(x_{i,}t_0)-\P \hat \mu_n(X,t_0)\right|
    =\sup_{t_0\in[0,1]}|\P_n\hat g_{t_0}(X)-\P \hat g_{t_0}(X)|
    =\O_p(n^{-1/2}).
\end{equation*}
Thus,
\begin{equation}\label{eq_appendix:consistency_eq2}
\left\|\frac{1}{n}\sum_{i=1}^{n}\hat \mu_n(x_{i,}\cdot)-\P \hat \mu_n(X,\cdot)\right\|_{L^2}. 
=\O_p(n^{-1/2})
\end{equation}
Recall that Theorem \ref{thm:doubly_robust} says that if $\sup_{t\in[0,1]}\sup_{\X\in\mathcal{X}}|\hat\pi_n(t\mid\X)-\pi(t\mid\X)|=\O_p(r_1(n))$, $\sup_{t\in[0,1]}\sup_{\X\in\mathcal{X}}|\hat\mu_n(t,\X)-Q(t,\X)|=\O_p(r_2(n))$, then
\begin{equation}\label{eq_appendix:consistency_eq3}
    \sup_{t_0\in[0,1]}\left|\P \left[\delta(T-t_0)\frac{Y-\hat\mu_n(T,\X)}{\hat\pi_n(T\C\X)}+\hat\mu_n(t_0,\X)\right]-\psi(t_0)\right|=\O_p(r_1(n)r_2(n)).
\end{equation}

Combining Equation (\ref{eq_appendix:thm2_eq1}), Equation (\ref{eq_appendix:consistency_eq2}) and Equation (\ref{eq_appendix:consistency_eq3}), using triangle inequality, we have
$$
\left\|\hat\epsilon(\cdot)\int_{\mathcal{X}}\frac{1}{\hat\pi(\cdot\mid\x)}d\P_n(\x)+\frac{1}{n}\sum_{i=1}^{n}\hat \mu_n(x_i,\cdot)-\psi(\cdot)\right\|_{L^2}
=\O_p(n^{-1/3}\sqrt{\log n}+r_1(n)r_2(n)).
$$
So if we set 
\begin{equation*}
\begin{split}
        \hat\psi(t_0)
        =\hat\epsilon(t_0)\int_{\mathcal{X}}\frac{1}{\hat\pi(t_0\mid\x)}d\P_n(\x)+\frac{1}{n}\sum_{i=1}^{n}\hat \mu_n(x_i,t_0)
=\frac{1}{n}\sum_{i=1}^{n}\left(\hat \mu(\x_i,t_0)+ \frac{\hat\epsilon(t_0)}{\hat\pi(t_0\mid\x_i)}\right)
\end{split}
\end{equation*}
we have
$$
\|\hat\psi-\psi\|_{L^2}=\O_p\left(n^{-1/3}\sqrt{\log n}+r_1(n)r_2(n)\right).
$$
\end{proof}

\subsubsection{Proof of Lemma \ref{thm:EIF}}\label{appendix_proof:EIF_single}
\begin{proof}
The proof follows \cite{kennedy2017non}. Denote
\[
\Gamma(\varepsilon)=\int_{\mathcal{T}}\gamma_{\varepsilon}(t)\int_{\mathcal{X}}\int_{\mathcal{Y}}y\pi(y\mid\x,t;\varepsilon)\pi(\x;\varepsilon)\pi(t;\varepsilon)dyd\x dt,
\]
where we write $\gamma_{\varepsilon}(\cdot):=\gamma(\cdot\,;\P_{T,\varepsilon})$ for brevity.
Then, by definition, the efficient influence function for $\Gamma$ is the
unique function $\zeta(Y,\X,T)$ such that 
\begin{equation}\label{eq_appendix:EIF_target}
\Gamma'_{\varepsilon}(0)=\E\left(\zeta(Y,\X,T)\ell'_{\varepsilon}(Y,\X,T;0)\right)
\end{equation}
where $\ell'_{\varepsilon}(y,\x,t;0)=\frac{d\log\P_{Y,\X,T,\varepsilon}(y,\x,t)}{d\varepsilon}\mid_{\varepsilon=0}$, $\P_{Y,\X,T,\varepsilon}(y,\x,t)$ is a parametric submodel with parameter $\varepsilon\in\mathbb{R}$, and $\P_{Y,\X,T,0}(y,\x,t)=\P_{Y,\X,T}(y,\x,t)$,
and $\Gamma'_{\epsilon}(0)=d\Gamma(\varepsilon)/d\varepsilon \mid_{\varepsilon=0}$. We have
\begin{align*}
\Gamma'_{\varepsilon}(0)  =&\int_{\mathcal{T}}\gamma(t)\int_{\mathcal{X}}\int_{\mathcal{Y}}y\left[\pi_{\varepsilon}'(y\mid\x,t;0)\pi(\x)+\pi(y\mid\x,t)\pi'_{\varepsilon}(\x;0)\right]\pi(t)dyd\x dt
\\
 & +\int_{\mathcal{T}}\gamma_{\varepsilon}(t)\int_{\mathcal{X}}\int_{\mathcal{Y}}y\pi(y\mid\x,t;\epsilon)\pi(\x;\varepsilon)\pi'_{\varepsilon}(t;0)dyd\x dt
\\&+\int_{\mathcal{T}}\gamma'_{\varepsilon}(t)\int_{\mathcal{X}}\int_{\mathcal{Y}}y\pi(y\mid\x,t;\varepsilon)\pi(\x;\varepsilon)\pi(t;\varepsilon)dyd\x dt
 \\
=&\,\,I_{1}+I_{2}+I_3.
\end{align*}
From $\ell_{\varepsilon}'(y\mid\x,t;0)=\pi'_{\varepsilon}(y\mid\x,t;0)/\pi(y\mid\x,t;0)$ and definition of $\psi(t)$, we have
\begin{align*}
I_{1} & :=\int_{\mathcal{T}}\gamma(t)\int_{\mathcal{X}}\int_{\mathcal{Y}}y\left[\ell'_{\varepsilon}(y\mid\x,t;0)\pi(y\mid\x,t)\pi(\x)+\pi(y\mid\x,t)\ell'_{\varepsilon}(\x;0)\pi(\x)\right]\pi(t)dyd\x dt\\
 & =\int_{\mathcal{T}}\gamma(t)\left[\E_{\X}\E_{Y|\X,T}\left(y\ell'_{\varepsilon}(y\mid\x,t;0)\right)+\E_{\X}\left(\mu(\x,t)\ell'_{\epsilon}(\x;0)\right)\right]\pi(t)dt,
\\
I_{2} &:=\int_{\mathcal{T}}\gamma(t)\psi(t)\pi'_{\varepsilon}(t;0)dt
 =\int_{\mathcal{T}}\gamma(t)\psi(t)\ell'_{\varepsilon}(t;0)\pi(t)dt,
\\
I_3&:=\int_{\mathcal{T}}\gamma'_{\varepsilon}(t)\psi(t)\pi(t;0)dt.
\end{align*}
Thus, we have
\begin{align}\label{eq_appendix:lemma1_gamma}
\Gamma'_{\varepsilon}(0)  =&\int_{\mathcal{T}}\left[\gamma(t)\E_{\X}\E_{Y|\X,T}\left(y\ell'_{\epsilon}(y\mid\x,t;0)\right)+\E_{\X}\left(\mu(\x,t)\ell'_{\epsilon}(\x;0)\right)+\gamma(t)\psi(t)\ell'_{\epsilon}(t;0)+\gamma'_{\epsilon}(t)\psi(t)\right]\pi(t)dt.
\end{align}
Meanwhile, for the right hand size term $\E_{\X,T,Y}\left(\zeta(Y,\X,T)\ell'_{\varepsilon}(Y,\X,T;0)\right)$ in Equation (\ref{eq_appendix:EIF_target}),
from 
$$\ell'_{\varepsilon}(Y,\X,T;0)=\ell'_{\varepsilon}(Y|\X,T;0)+\ell'_{\varepsilon}(\X,T;0),$$
where $\ell'_{\varepsilon}(Y|\X,T;0)$ and $\ell'_{\varepsilon}(\X,T;0)$ are defined similar to $\ell'_{\varepsilon}(Y,\X,T;0)$, we know that
\begin{equation}\label{eq_appendix:log_likelihood_decomp}
\E_{\X,T,Y}\left(\zeta(Y,\X,T)\ell'_{\varepsilon}(Y,\X,T;0)\right)=\E_{\X,T,Y}\left(\zeta(Y,\X,T)\ell'_{\varepsilon}(Y|\X,T;0)\right)+\E_{\X,T,Y}\left(\zeta(Y,\X,T)\ell'_{\varepsilon}(\X,T;0)\right).    
\end{equation}
Now we bound each term in Equation (\ref{eq_appendix:log_likelihood_decomp}) separately.
Recall that
\begin{equation*} 
\begin{split}
    \zeta(Y,\X,T,\pi,\mu)
    =&\,\gamma(T)\frac{Y-\mu(T,\X)}{\pi(T\mid\X)}\int_{\mathcal{X}}\pi(T\mid\x)d\P(\x)+\gamma(T)\int_{\mathcal{X}}\mu(T,\x)d\P(\x)-\Gamma
    \\&+\int_{\mathcal{\mathcal{T}}}\mu(t,\X)\text{\ensuremath{\gamma(t)}}d\P_T(t)+\frac{\gamma'_{\varepsilon}(T)}{l'_{\varepsilon}(T;0)}\int_{\mathcal{X}}\mu(T,\x)d\P(\x)
    \\&-\int_{\mathcal{T}}\int_{\mathcal{X}}\gamma(t)\mu(t,x)dp(x)d\P_T(t)-\E_T\left[\frac{\gamma'_{\varepsilon}(T)}{l'_{\varepsilon}(T;0)}\int_{\mathcal{X}}\mu(T,\x)d\P(\x)\right].
\end{split}
\end{equation*}
Thus, for the first term in Equation (\ref{eq_appendix:log_likelihood_decomp}), we have
\begin{equation}\label{appendix_eq:EIF_RHS1}
\begin{split}
& \E_{\X,T,Y}\left(\zeta(Y,\X,T)\ell'_{\varepsilon}(Y|\X,T;0)\right)\\
=&\E_{\X,T}\left[\E_{Y|\X,T}\left(\zeta(Y,\X,T)\ell'_{\varepsilon}(Y|\X,T;0)\right)\right]\\
\stackrel{(a)}{=} & \E_{\X,T}\E_{Y|\X,T}\left[\left(\gamma(T)\frac{Y}{\pi(T\mid\X)}\int_{\mathcal{X}}\pi(T\mid\x)d\P(\x)\right)\ell'_{\varepsilon}(Y|\X,T;0)\right]\\
\stackrel{(b)}{=} & \int_{T\times\mathcal{X}}\left[\gamma(t)\frac{\E_{Y|\x,t}\left[Y\ell'_{\varepsilon}(Y|\x,t;0)\right]}{\pi(t\mid\x)}\left(\int_{\mathcal{X}}\pi(t\mid\x)d\P(\x)\right)p(\x)\pi(t\mid\x)\right]dtd\x\\
\stackrel{}{=} & \int_{\mathcal{\mathcal{T\times\mathcal{X}}}}\gamma(t)\E_{Y|\x,t}\left[Y\ell'_{\varepsilon}(Y|\x,t;0)\right]p(t)p(\x)dtd\x\\
\stackrel{}{=} & \int_{\mathcal{\mathcal{T}}}\gamma(t)\E_{\x}\left[\E_{Y|\x,t}\left[Y\ell'_{\varepsilon}(Y|\x,t;0)\right]\right]p(t)dt,
\end{split}
\end{equation}
where $(a)$ follows from the fact that $\E_{Y|\X,T}\left(\ell'_{\epsilon}(Y|\X,T;0)\right)=0$,
$(b)$ uses law of iterated expectations.

For the second term in Equation (\ref{eq_appendix:log_likelihood_decomp}), we have
\begin{equation}\label{appendix_eq:EIF_RHS2}
\begin{split}
 & \E_{\X,T,Y}\left(\zeta(Y,\X,T)\ell'_{\varepsilon}(\X,T;0)\right)
 \\
\stackrel{(a)}{=} & \E_{\X,T,Y}\left[\left(\gamma(T)\frac{Y-\mu(T,\X)}{\pi(T\mid\X)}\int_{\mathcal{X}}\pi(T\mid\x)d\P(\x)\right)\ell'_{\varepsilon}(\X,T;0)\right]
\\
 & +\E_{\X,T}\left(\gamma(T)\int_{\mathcal{X}}\mu(T,\x)d\P(\x)+\frac{\gamma'_{\varepsilon}(T)}{l'_{\varepsilon}(T;0)}\int_{\mathcal{X}}\mu(T,\x)d\P(\x)\right)\left(\ell'_{\varepsilon}(\X|T;0)+\ell'_{\varepsilon}(T;0)\right)
 \\
 & +\E_{\X,T}\left[\left(\int_{\mathcal{\mathcal{T}}}\mu(t,\X)\text{\ensuremath{\gamma(t)}}d\P(t)\right)\left(\ell'_{\varepsilon}(T|\X;0)+\ell'_{\varepsilon}(\X;0)\right)\right]
 \\
 &+\E_{\X,T}\left[\left(-\int_{\mathcal{T}}\gamma(t)\psi(t)d\P(t)-\E_T\left[\frac{\gamma'_{\varepsilon}(T)}{l'_{\varepsilon}(T;0)}\int_{\mathcal{X}}\mu(T,\x)d\P(\x)\right]-\Gamma\right)\left(\ell'_{\varepsilon}(T,\X;0)\right)\right]
 \\
\stackrel{(b)}{=} & \E_{T}\left[\gamma(T)\int_{\mathcal{X}}\mu(T,\x)d\P(\x)\ell'_{\varepsilon}(T;0)\right]+\E_{T}\left[\frac{\gamma'_{\varepsilon}(T)}{l'_{\varepsilon}(T;0)}\int_{\mathcal{X}}\mu(T,\x)d\P(\x)\ell_{\varepsilon}'(T;0)\right]
\\
&+\E_{\X}\left[\int_{\mathcal{\mathcal{T}}}\mu(t,\X)\text{\ensuremath{\gamma(t)}}d\P(t)\ell'_{\varepsilon}(\X;0)\right]
\\
\stackrel{(c)}{=} & \int_{\mathcal{T}}\gamma(t)\psi(t)\ell'_{\varepsilon}(t;0)p(t)dt+\int_{\mathcal{T}}\E_{\X}\left[\mu(t,\X)\ell'_{\varepsilon}(\X;0)\right]\gamma(t)d\P(t)
+\int_{\mathcal{T}}\gamma'_{\varepsilon}(t)\psi(t)p(t)dt 
\end{split}
\end{equation}
where $(a)$ follows from the fact that  $$\ell'_{\varepsilon}(\X,T;0)=\ell'_{\varepsilon}(\X|T;0)+\ell'_{\varepsilon}(T;0)=\ell'_{\varepsilon}(T|\X;0)+\ell'_{\varepsilon}(\X;0),$$
(b) follows from the fact that 
$$\E_{Y|\X,T}Y=\mu(T,\X),\quad \E_{T|\X}\ell'_{\varepsilon}(T|\X;0)=\E_{\X|T}\ell'_{\varepsilon}(\X|T;0)=\E_{\X,T}\ell'_{\varepsilon}(\X,T;0)=0$$
and law of iterated expectations, $(c)$
follows from the definition of $\psi(t)=\int_{\mathcal{X}}\mu(t,\x)d\P(\x)$.

Comparing Equation (\ref{appendix_eq:EIF_RHS1}), Equation (\ref{appendix_eq:EIF_RHS2}) against Equation (\ref{eq_appendix:lemma1_gamma}), we immediately get $$\Gamma'_{\varepsilon}(0)=\E_{\X,T,Y}\left(\zeta(Y,\X,T)\ell'_{\varepsilon}(Y,\X,T;0)\right),$$
which implies that $\zeta(Y,\X,T)$ is indeed an efficient influence
function of $\Gamma$.
\end{proof}

\subsubsection{Proof of Lemma \ref{lemma_appendix}}
\begin{proof}
This proof follows from \cite{huang2004polynomial}.
Let us start with the following decomposition:
\[
\|\hat{\epsilon}_n-\check\epsilon_n\|_{L^2}
\leq
\|\check\epsilon_n-\tilde{\epsilon}_n\|_{L^2}+\|\hat{\epsilon}_n-\tilde{\epsilon}_n\|_{L^2}
\]
where $\tilde \epsilon_n=\left(\vpk(t)\right)^T\tilde{\bm\alpha}$. The first term $\|\check\epsilon_n-\tilde{\epsilon}_n\|_{L^2}$ is
the bias and the second term $\|\hat{\epsilon}_n-\tilde{\epsilon}_n\|_{L^2}$
is the variance.

\paragraph{Bound on bias term}
Let $\bmalpha\in\R^{K_n}$ be such that $\|\left(\bmalpha\right)^T\vpk-\check\epsilon_n\|_{\infty}=\inf_{f\in\mathcal{B}_{K_n}}\|f-\check\epsilon_n\|_{\infty}$. Then we have
\begin{equation*}
\begin{split}
\|\check\epsilon_n-\tilde{\epsilon}_n\|_{L^2}
&=\|\check\epsilon_n-\left(\bmalpha\right)^T\vpk+\left(\bmalpha\right)^T\vpk-\tilde{\epsilon}_n\|_{L^2}
\\&\leq \|\check\epsilon_n-\left(\bmalpha\right)^T\vpk\|_{L^2}+\|\left(\bmalpha\right)^T\vpk-\tilde\epsilon_n\|_{L^2}.
\end{split}
\end{equation*}
By definition of $\bmalpha$ and properties of B-spline space, we have a bound on the first term 
$$\|\check\epsilon_n-\left(\bmalpha\right)^T\vpk\|_{L^2}=\O_p\left(\rho_n\right),$$
where $\rho_n=\inf_{f\in\text{Span}\{\bm\varphi^{K_n}\}}\sup_{t\in[0,1]}|\check\epsilon_n(t)-f(t)|$.
Notice that the second term can also be bounded:
\begin{equation}\label{eq_appendix:bias_first_eq}
\begin{split}
&\|\left(\bmalpha\right)^T\vpk-\tilde\epsilon_n\|_{L^2}
\\\overset{(a)}{\asymp}&\|\bmalpha-\tilde{\bm\alpha}\|_2/\sqrt{K_n}
\\=&\|\left(B_n^T\Pi_n^{-2}B_n\right)^{-1}B_n^T\Pi_n^{-2}\left(B_n\bmalpha-\tilde\Pi_n^{2}\tilde{\boldsymbol{Z}}_n\right)\|_{2}/\sqrt{K_n}
\\\overset{(b)}{\asymp}&\frac{K_n}{n}\|B_n^T\Pi_n^{-2}\left(B_n\bmalpha-\tilde\Pi_n^{2}\tilde{\boldsymbol{Z}}_n\right)\|_2/\sqrt{K_n}
\\\overset{(c)}{\asymp}&\frac{\sqrt{K_n}}{n}\rho_n\sqrt{\vecI^T\Pi_n^{-2}B_n^TB_n\Pi_n^{-2}\vecI}
\\\asymp&\frac{\sqrt{K_n}}{n}\rho_n\sqrt{\sum_{k=1}^{K_n}\left(\sum_{i=1}^n\frac{\varphi_k(t_i)}{\hat\pi_n(t_i\mid\x_i)}\right)^2}
\\\overset{(d)}{\asymp}&\sqrt{K_n}\rho_n\sqrt{\sum_{k=1}^{K_n}\left(\frac{1}{n}\sum_{i=1}^n\varphi_k(t_i)\right)^2},
\end{split}
\end{equation}
where (a) follows from properties of B-spline basis functions, (b) follows from Lemma \ref{lemma:eigenvalue}, (c) follows from properties of B-spline space such that $\left\|B_n\bmalpha-\tilde\Pi_n^{2}\tilde{\boldsymbol{Z}}_n\right\|_{\infty}=\O_p(\rho_n)$  because $\left(\check\epsilon_n(t_1),\check\epsilon_n(t_1),\cdots,\check\epsilon_n(t_1)\right)^T=\tilde\Pi_n^{2}\tilde{\boldsymbol{Z}}_n$, (d) is from the upper and lower boundedness of $\hat\pi_n$.
Following proof of Lemma A.6 of \cite{huang2004polynomial}, for any $a>[\E\varphi_k(T)]^2K_n$, we have
\begin{equation*}
\begin{split}
&{\Pr}\left(\sum_{k=1}^{K_n}\left(\frac{1}{n}\sum_{i=1}^n\varphi_k(t_i)\right)^2>a\right)
\\\overset{(a)}{\leq}&\sum_{k=1}^{K_n}\Pr\left(\left|\frac{1}{n}\sum_{i=1}^n\varphi_k(t_i)\right|>\sqrt{\frac{a}{K_n}}\right)
\\\leq&\sum_{k=1}^{K_n}\Pr\left(\left|\frac{1}{n}\sum_{i=1}^n\varphi_k(t_i)-\E\varphi_k(T)\right|+\left|\E\varphi_k(T)\right|>\sqrt{\frac{a}{K_n}}\right)
\\\leq&\sum_{k=1}^{K_n}\Pr\left(\left|\frac{1}{n}\sum_{i=1}^n\varphi_k(t_i)-\E\varphi_k(T)\right|>\sqrt{\frac{a}{K_n}}-\left|\E\varphi_k(T)\right|\right)
\\\overset{(b)}{\leq}& 
2K_n\exp\left\{-2n\left(\sqrt{a/K_n}-|\E\varphi_k(T)|\right)^2\right\},
\end{split}
\end{equation*}
where (a) uses union bound, (b) follows from Hoeffding's Inequality for bounded random variables. Since $\E\varphi_k(T)\asymp 1/K_n$, we can pick $a=2[\E\varphi_k(T)]^2K_n\asymp 1/K_n$ and thus, $\sum_{k=1}^{K_n}\left(\frac{1}{n}\sum_{i=1}^n\varphi_k(t_i)\right)^2=\O_p\left(\frac{1}{K_n}\right)$. Plugging into Equation (\ref{eq_appendix:bias_first_eq}), we get
$$
\|\left(\bmalpha\right)^T\vpk-\tilde\epsilon_n\|_{L^2}
=\O_p\left(\rho_n\right),
$$
Thus, we can bound the bias term
\[
\|\check\epsilon_n-\tilde{\epsilon}_n\|_{L^2}
=\O_p\left(\rho_n\right).
\]

\paragraph{Bound on variance term}
From properties of B-spline space, we have
\begin{equation*}
\begin{split}
&\|\hat\epsilon_n-\tilde\epsilon_n\|_{L^2}
\lesssim
\|\hat{\bm\alpha}-\tilde{\bm\alpha}\|_2/\sqrt{K_n}.
\end{split}
\end{equation*}
Notice that
\begin{align}
&\left\|\hat{\bm\alpha}-\tilde{\bm\alpha}\right\|_2
=\left\|\left(B_n^T\Pi_n^{-2}B_n\right)^{-1}B_n^T\left(\Zn-\Pi_n^{-2}\tilde\Pi_n^2\tilde\Zn\right)\right\|_2\nonumber
\\=&\left\|\left(B_n^T\Pi_n^{-2}B_n\right)^{-1}B_n^T\left(\Zn-\tilde\Zn+\tilde\Zn-\Pi_n^{-2}\tilde\Pi_n^2\tilde\Zn\right)\right\|_2\nonumber
\\\leq&\left\|\left(B_n^T\Pi_n^{-2}B_n\right)^{-1}B_n^T\left(\Zn-\tilde\Zn\right)\right\|_2
+\left\|\left(B_n^T\Pi_n^{-2}B_n\right)^{-1}B_n^T\left(\tilde\Zn-\Pi_n^{-2}\tilde\Pi_n^2\tilde\Zn\right)\right\|_2 \label{eq_appendix:variance}
\end{align}

Control of the first term in Equation (\ref{eq_appendix:variance}): denote $\bm\delta=(\delta_1,\cdots,\delta_n)^T:=\Zn-\tilde\Zn$,  then we have
\begin{equation}\label{eq_appendix:variance_eq1}
\begin{split}
&\left\|\left(B_n^T\Pi_n^{-2}B_n\right)^{-1}B_n^T\left(\Zn-\tilde\Zn\right)\right\|^2_2
=\left(\Zn-\tilde\Zn\right)^TB_n\left(B_n^T\Pi_n^{-2}B_n\right)^{-2}B_n^T\left(\Zn-\tilde\Zn\right)
\\\overset{(a)}{\asymp}&\frac{K_n^2}{n^2}\bm\delta^T B_n B_n^T\bm\delta
=\frac{K_n^2}{n^2}\left\|\sum_{i=1}^n\vpk(t_i)\delta_i\right\|_2^2
=\frac{K_n^2}{n^2}\sum_{k=1}^{K_n}\left(\sum_{i=1}^n\varphi_k(t_i)\delta_i\right)^2
\\\leq &K_n^2\sum_{k=1}^{K_n}\sup_{\hat{\pi},\hat\mu}\left(\frac{1}{n}\sum_{i=1}^n\varphi_k(t_i)\delta_i\right)^2
\end{split}
\end{equation}
where (a) is from Lemma \ref{lemma:eigenvalue}. By definition we know
\begin{equation*}
\begin{split}
\delta_i
&=\frac{y_i-\hat \mu_n(\x_i,t_i)}{\hat\pi(t_i\mid\x_i)}-\P\left(\frac{Y-\hat \mu_n(\X,T)}{\hat\pi(T\mid\X)}\mid T=t_i\right)
\\&=\frac{\mu(\x_i,t_i)-\hat \mu_n(\x_i,t_i)}{\hat\pi(t_i\mid\x_i)}-\P\left(\frac{\mu(\X,T)-\hat \mu_n(\X,T)}{\hat\pi(T\mid\X)}\mid T=t_i\right)+\frac{v_i}{\hat\pi(t_i\mid\x_i)}
\\&=u_i+\tilde v_i,
\end{split}
\end{equation*}
where $u_i=\frac{\mu(\x_i,t_i)-\hat \mu_n(\x_i,t_i)}{\hat\pi(t_i\mid\x_i)}-\P\left(\frac{\mu(\X,T)-\hat \mu_n(\X,T)}{\hat\pi(T\mid\X)}\mid T=t_i\right)$, $\E \left(u_i\mid t_i\right)=0$ and $\tilde v_i=\frac{v_i}{\hat\pi(t_i\mid\x_i)}$, $\E\left(\tilde v_i\mid t_i,\x_i\right)=0$.
Thus, from union bound, we have
\begin{align}
&\Pr\left(\sum_{k=1}^{K_n}\sup_{\hat{\pi},\hat\mu}\left(\frac{1}{n}\sum_{i=1}^n\varphi_k(t_i)\delta_i\right)^2>a\right) \nonumber
\\\leq&
\sum_{k=1}^{K_n} \Pr\left(\sup_{\hat{\pi},\hat\mu}\left(\frac{1}{n}\sum_{i=1}^n\varphi_k(t_i)\delta_i\right)^2>\frac{a}{K_n}\right) \nonumber
\\=&
\sum_{k=1}^{K_n} \Pr\left(\sup_{\hat{\pi},\hat\mu}\left|\frac{1}{n}\sum_{i=1}^n\varphi_k(t_i)\delta_i\right|>\sqrt{\frac{a}{K_n}}\right) \nonumber
\\=&\sum_{k=1}^{K_n} \Pr\left(\sup_{\hat{\pi},\hat\mu}\left|\frac{1}{n}\sum_{i=1}^n\varphi_k(t_i)(u_i+\tilde v_i)\right|>\sqrt{\frac{a}{K_n}}\right)\nonumber
\\=&\sum_{k=1}^{K_n} \Pr\left(\sup_{\hat{\pi},\hat\mu}\left|\frac{1}{n}\sum_{i=1}^n\varphi_k(t_i) u_i\right|>\frac{1}{2}\sqrt{\frac{a}{K_n}}\right)
+
\sum_{k=1}^{K_n} \Pr\left(\sup_{\hat{\pi},\hat\mu}\left|\frac{1}{n}\sum_{i=1}^n\varphi_k(t_i)\tilde v_i\right|>\frac{1}{2}\sqrt{\frac{a}{K_n}}\right).
\label{eq_appendix:Kn_delta}
\end{align}

From Lemma \ref{lemma:complexity}, we know that
\begin{equation*}
\begin{split}
\Rad((\mathcal{Q}+\mu)\mathcal{U}^{-1})
&\leq \frac{1}{2}(\|\mathcal{Q}\|_{\infty}+\|\mathcal{U}^{-1}\|_{\infty})\left( \Rad(\mathcal{Q})+\Rad(\mathcal{U}^{-1})\right)
\\&\overset{(a)}\leq
\frac{1}{2}(\|\mathcal{Q}\|_{\infty}+\|\mathcal{U}^{-1}\|_{\infty})\left( \Rad(\mathcal{Q})+\max\left(\frac{c^2}{2},\frac{2}{(c-1/c)^2}\right)\Rad(\mathcal{U}-\frac{1}{2c})+\frac{2c}{n}\right)
\\&=
\O(n^{-1/2}),
\end{split}
\end{equation*}
where (a) follows from plugging $h:x\mapsto \frac{1}{x-1/2c}+2c$ in Theorem 12(4) in \cite{bartlett2002rademacher}.
Similarly, write $\mathcal{A}=(\mathcal{Q}+\mu)\mathcal{U}^{-1}$, from Lemma \ref{lemma:complexity}, we have
\begin{equation*}
\begin{split}
&\Rad(\varphi_k\mathcal{A})
\leq 
\frac{1}{2}(\|\varphi_k\|_{\infty}+\|\mathcal{A}\|_{\infty})(\Rad(\varphi_k)+\Rad(\mathcal{A}))
=
\O(n^{-1/2}).
\end{split}
\end{equation*}

Thus, we bound the first term of (\ref{eq_appendix:Kn_delta}) using 
\begin{align}\label{eq_appendix:uniformbound_qiang}
\Pr\left(\sup_{\hat{\pi},\hat\mu}\left|\frac{1}{n}\sum_{i=1}^n\varphi_k(t_i) u_i\right|>\frac{1}{2}\sqrt{\frac{a}{K_n}}\right)
\stackrel{(a)}{\lesssim}
\frac{\E\left(\sup_{\hat{\pi},\hat\mu}\left|\frac{1}{n}\sum_{i=1}^n\varphi_k(t_i) u_i\right|\right)}{\frac{1}{2}\sqrt{\frac{a}{K_n}}}
\overset{(b)}{\asymp} \sqrt{\frac{K_n}{an}},
\end{align}
where (a) follows Markov Inequality,
and (b) follows from the definition of Rademacher complexity.

We bound the second term of Equation (\ref{eq_appendix:Kn_delta}) using union bound: for any $M_n>0$,
\begin{equation}\label{eq_appendix:tildeepsilon}
\begin{split}
&\Pr\left(\sup_{\hat{\pi},\hat\mu}\left|\frac{1}{n}\sum_{i=1}^n\varphi_k(t_i)\tilde v_i\right|>\frac{1}{2}\sqrt{\frac{a}{K_n}}\right)
\\\overset{}{\leq}& \Pr\left(\sup_{\hat{\pi},\hat\mu}\left|\frac{1}{n}\sum_{i=1}^n\varphi_k(t_i)\tilde v_i\I(|v_i|> M_n)\right|>\frac{1}{4}\sqrt{\frac{a}{K_n}}\right)
+
\Pr\left(\sup_{\hat{\pi},\hat\mu}\left|\frac{1}{n}\sum_{i=1}^n\varphi_k(t_i)\tilde v_i\I(|v_i|\leq M_n)\right|>\frac{1}{4}\sqrt{\frac{a}{K_n}}\right).
\end{split}
\end{equation}
We have from Markov Inequality that
\begin{equation}\label{eq_appendix:tildev_bounded}
\begin{split}
&\Pr\left(\sup_{\hat{\pi},\hat\mu}\left|\frac{1}{n}\sum_{i=1}^n\varphi_k(t_i)\tilde v_i\I(|v_i|\leq M_n)\right|>\frac{1}{4}\sqrt{\frac{a}{K_n}}\right)
\\\lesssim&\frac{\E\sup_{\hat{\pi},\hat\mu}\left|\frac{1}{n}\sum_{i=1}^n\varphi_k(t_i)/\hat\pi(t_i\mid\x_i)v_i\I(|v_i|\leq M_n)\right|}{\sqrt{a/K_n}}
\stackrel{(a)}{\lesssim}
\sqrt{\frac{K_n}{an}}M_n,
\end{split}
\end{equation}
where (a) follows from Lemma \ref{lemma:complexity}.
Also we have
\begin{equation}\label{eq_appendix:tildev_unbounded}
\begin{split}
&\Pr\left(\sup_{\hat{\pi},\hat\mu}\left|\frac{1}{n}\sum_{i=1}^n\varphi_k(t_i)\tilde v_i\I(|v_i|> M_n)\right|>\frac{1}{4}\sqrt{\frac{a}{K_n}}\right)  
\\\lesssim&
\frac{\E\sup_{\hat{\pi},\hat\mu}\left|\frac{1}{n}\sum_{i=1}^n\varphi_k(t_i)\tilde v_i\I(|v_i|> M_n)\right|}{\sqrt{a/K_n}}
\\\leq&
\frac{\E\sup_{\hat{\pi},\hat\mu}\frac{1}{n}\sum_{i=1}^n\frac{\varphi_k(t_i)}{\hat\pi(t_i\mid\x_i)}|v_i|\I(|v_i|> M_n)}{\sqrt{a/K_n}}
\\\lesssim&
\frac{\E\left[|v|\I(|v|> M_n)\right]}{\sqrt{a/K_n}}
\\\overset{(a)}{=}&
\frac{\int_{0}^{\infty}\left(1-F_W(w)\right)dw-\int_{-\infty}^0F_W(w)dw}{\sqrt{a/K_n}}
\\=&
\frac{\int_{0}^{\infty}\P(|v|\geq\max(M_n,w))dw}{\sqrt{a/K_n}}
\\\overset{(b)}{\lesssim}&
\frac{\int_0^{\infty}e^{-\sigma[\max(M_n,w)]^2}dw}{\sqrt{a/K_n}}
\\\leq&
\frac{\int_0^{\infty}e^{-\sigma[M_n+w]^2}dw}{\sqrt{a/K_n}}
\\\overset{(c)}{\lesssim}&
\frac{e^{-\sigma M^2}\sqrt{K_n}}{M_n}\frac{1}{\sqrt{a}},
\end{split}
\end{equation}
where (a) uses the formula $\E W=\int_0^{\infty}(1-F(w))dw-\int_{-\infty}^0F(w)dw$ and we set $W=|v|\I(|v|> M)$, (b) utilizes the fact that $v$ follows sub-Gaussian distribution, (c) uses Mills ratio.

Plugging Equation (\ref{eq_appendix:uniformbound_qiang}), (\ref{eq_appendix:tildev_bounded}), (\ref{eq_appendix:tildev_unbounded}) into (\ref{eq_appendix:Kn_delta}), and taking $M\asymp\sqrt{\log n}$, $a\asymp \frac{K_n\log n}{n}$, we get
\begin{equation}
\sum_{k=1}^{K_n}\sup_{\hat{\pi},\hat\mu}\left(\frac{1}{n}\sum_{i=1}^n\varphi_k(t_i)\delta_i\right)^2
=\O_p\left(\frac{K_n\log n}{n}\right)
\end{equation}
which, when plugging back into Equation (\ref{eq_appendix:variance_eq1}), gives
\begin{align}\label{eq_appendix:variance_eq1_rate}
\left\|\left(B_n^T\Pi_n^{-2}B_n\right)^{-1}B_n^T\left(\Zn-\tilde\Zn\right)\right\|_2
=\O_p\left(\sqrt{\frac{K_n^3\log n}{n}}\right).
\end{align}

Control of the second term in Equation (\ref{eq_appendix:variance}):
Notice that each coordinate of $\tilde\Zn-\Pi_n^{-2}\tilde\Pi_n^2\tilde\Zn$ is bounded, thus using similar arguments as that of Equation (\ref{eq_appendix:uniformbound_qiang}), we know that
\begin{align}\label{eq_appendix:variance_eq2_rate}
\left\|\left(B_n^T\Pi_n^{-2}B_n\right)^{-1}B_n^T\left(\tilde\Zn-\Pi_n^{-2}\tilde\Pi_n^2\tilde\Zn\right)\right\|_2
=\O_p\left(\sqrt{\frac{K_n^3\log n}{n}}\right).
\end{align}
Combining Equation (\ref{eq_appendix:variance_eq1_rate}) and (\ref{eq_appendix:variance_eq2_rate}) into (\ref{eq_appendix:variance}), we know that
\begin{align*}
\|\hat{\bm\alpha}-\tilde{\bm\alpha}\|_2   
=\O_p\left(\sqrt{\frac{K_n^3\log n}{n}}\right),
\end{align*}
and thus,
\begin{align}
\|\hat\epsilon_n-\tilde\epsilon_n\|_{L^2}
=\O_p\left(\sqrt{\frac{K_n^2\log n}{n}}\right).
\end{align}
Combining the rate on bias and variance term, we get
$$
\|\hat{\epsilon}_n-\check\epsilon_n\|_{L^2}
=\O_p\left(\rho_n+\sqrt{\frac{K_n^2\log n}{n}}\right)
\stackrel{(a)}{=}\O_p\left(K_n^{-2}+\frac{K_n\sqrt{\log n}}{\sqrt{n}}\right),
$$
where (a) follows from assumption (iii), giving
$$
\|\hat{\epsilon}_n-\check\epsilon_n\|_{L^2}
=\O_p\left(n^{-1/3}\sqrt{\log n}\right),
$$
when taking $K_n\asymp n^{1/6}$.
\end{proof}

\subsubsection{Proof of Lemma \ref{lemma:eigenvalue}}
\begin{proof}
Suppose the SVD decomposition of $B_n=U\Lambda V^T$ where $U\in\R^{n\times n}$, $\Lambda\in\R^{n\times K_n}$, $V\in\R^{K_n\times K_n}$. From Lemma A.3 of \cite{huang2004polynomial}, we know that all diagonal elements of $\left(K_n/n\right)\Lambda^T\Lambda$ fall between some positive constants. Notice that the eigenvalues of $\left(K_n/n\right)B_n^T\Pi_n^{-2}B_n$ are the diagonal elements of $\left(K_n/n\right)\Lambda^T\Pi_n^{-2}\Lambda$. From the upper and lower boundedness of $\hat\pi_n$, we can get the desired conclusion. 
\end{proof}

\subsubsection{Proof of Lemma \ref{lemma:complexity}}
\begin{proof}
Write $\F_3=\F_1+\F_2, \F_4=\F_1-\F_2$. Notice that
$\F_1\F_2=\{f_1f_2:f_1\in\F_1,f_2\in\F_2\}=\{\frac{1}{4}(f_1+f_2)^2-\frac{1}{4}(f_1-f_2)^2:f_1\in\F_1,f_2\in\F_2\}=\frac{1}{4}\F_3^2-\frac{1}{4}\F_4^2$. Let $h:x\mapsto x^2$, from Theorem 12(4) of \cite{bartlett2002rademacher} we know that
$$
\Rad(\F_3\F_3)=
\Rad(h\circ\F_3)
\leq
2\|\F_3\|_{\infty}\Rad(\F_3).
$$
Thus,
\begin{equation*}
\begin{split}
&\Rad(\F_1\F_2)
\\=&\Rad(\frac{1}{4}\F_3^2-\frac{1}{4}\F_4^2)
\\\leq&\Rad(\frac{1}{4}\F_3^2)+\Rad(-\frac{1}{4}\F_4^2)
\\\leq&\frac{1}{4}\Rad(\F_3^2)+\frac{1}{4}\Rad(\F_4^2)
\\\leq&\frac{1}{2}\|\F_3\|_{\infty}\Rad(\F_3)+\frac{1}{2}\|\F_4\|_{\infty}\Rad(\F_4)
\\\leq&\frac{1}{2}(\|\F_3\|_{\infty}+\|\F_4\|_{\infty})(\Rad(\F_3)+\Rad(\F_4)).
\end{split}
\end{equation*}
\end{proof}

\subsection{Additional Results}
This section formally states and proves the efficient influence function of a multidimensional vector, which is briefly mentioned in the main text.
Suppose $\bm \Gamma=\left(\Gamma_1,\Gamma_2,\cdots,\Gamma_d\right)^T\in\R^d$ where
\[
\Gamma_j=\int_{\mathcal{T}}\gamma^{(j)}\left(t;\P_{T}\right)\int_{\mathcal{X}}\int_{\mathcal{Y}}yp(y\mid\x,t)p(\x)p(t)dyd\x dt. 
\]
Then we have the following theorem:
\begin{thm}\label{thm:EIF_multidimensional}
The efficient influence function for the $d$-dimensional vector $\bm\Gamma$ 
is
\begin{equation*}
    \bm{\zeta}(Y,\X,T,\pi,\mu,\bm\Gamma)=\left(\zeta_1(Y,\X,T,\pi,\mu,\Gamma_1),\zeta_2(Y,\X,T,\pi,\mu,\Gamma_2),\cdots,\zeta_d(Y,\X,T,\pi,\mu,\Gamma_d)\right)^T\in\R^d
\end{equation*}
where $\zeta_j(Y,\X,T,\pi,\mu,\Gamma_j)$ is the efficient influence function for $\Gamma_j$, $j=1,2,\cdots,d$.
\end{thm}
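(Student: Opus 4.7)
The plan is to reduce the multidimensional claim to the scalar case already handled in Lemma \ref{thm:EIF}. Recall that for a $d$-dimensional parameter $\bm{\Gamma}$, the efficient influence function is the unique $\R^d$-valued function $\bm{\zeta}(Y,\X,T)$ satisfying the matrix identity
\[
\left[\frac{d\Gamma_q(\bm{\epsilon})}{d\epsilon_k}\bigg|_{\bm{\epsilon}=0}\right]_{q,k=1,\ldots,d}
=
\E\!\left(\bm{\zeta}(Y,\X,T)\,\bigl(\ell'_{\bm{\epsilon}}(Y,\X,T;0)\bigr)^{T}\right),
\]
where $\bm{\epsilon}\in\R^d$ indexes a smooth parametric submodel and $\ell'_{\bm{\epsilon}}$ denotes its score vector. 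I would take this characterization as the starting point.

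The first step is to observe that this matrix equation is equivalent to the collection of $d^2$ scalar equations
\[
\frac{d\Gamma_q(\bm{\epsilon})}{d\epsilon_k}\bigg|_{\epsilon_k=0}
=
\E\!\left[\zeta_q(Y,\X,T)\,\ell'_{\epsilon_k}(Y,\X,T;0)\right],\qquad q,k=1,\ldots,d,
\]
obtained by reading off the $(q,k)$-entry of each side and using that differentiating $\Gamma_q$ in $\epsilon_k$ only requires varying the single coordinate $\epsilon_k$ (holding the others at $0$). Thus it suffices to verify, for each fixed pair $(q,k)$, this scalar identity with $\zeta_q$ on the right-hand side being the $q$-th proposed component of $\bm{\zeta}$.

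The second step is to notice that each such scalar identity is exactly the defining property of the efficient influence function of the scalar parameter $\Gamma_q$ along the one-dimensional submodel indexed by $\epsilon_k$. Since Lemma \ref{thm:EIF} already establishes that $\zeta_q$, as defined in the statement of Theorem \ref{thm:EIF_multidimensional}, is an efficient influence function for $\Gamma_q$ for \emph{every} smooth one-dimensional submodel (the symbol $\varepsilon$ there being a dummy variable for the perturbation direction), we may apply it verbatim with $\varepsilon\leftarrow\epsilon_k$ to obtain the required identity. Taking $q$ and $k$ to range over $1,\ldots,d$ assembles the full matrix equation and completes the proof.

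There is essentially no substantive obstacle here: the only thing one has to be mildly careful about is confirming that the single-parameter result of Lemma \ref{thm:EIF} applies uniformly across directions, which it does because its derivation places no restriction on the chosen submodel beyond smoothness. Everything else is bookkeeping.
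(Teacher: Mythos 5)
Your proposal is correct and follows essentially the same route as the paper's own proof: reduce the defining matrix identity to its $d^2$ scalar entries and invoke the scalar result (Lemma \ref{thm:EIF}) with the perturbation direction $\epsilon_k$ playing the role of the dummy variable $\varepsilon$, which works because that lemma holds for any smooth one-dimensional submodel. No further comment is needed.
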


\begin{proof}
Define $\bm\Gamma(\bm\varepsilon):=\left(\Gamma_1(\bm\varepsilon),\Gamma_2(\bm\varepsilon),\cdots,\Gamma_d(\bm\varepsilon)\right)^T\in \mathbb{R}^d$
where for $i=1,2,\cdots,d$,
\[
\Gamma_i(\bm\varepsilon)=\int_{\mathcal{T}}\gamma^{(i)}\left(t;\P_{T,\bm\varepsilon}\right)\int_{\mathcal{X}}\int_{\mathcal{Y}}yp(y\mid\x,t;\bm\varepsilon)p(\x;\bm\varepsilon)p(t;\bm\varepsilon)dyd\x dt.
\]
Define
$
\ell(Y,\X,T;\bm\varepsilon)=\log \P_{Y,\X,T;\bm\varepsilon}$
where $\P_{Y,\X,T;\bm\varepsilon}$ is a parametric submodel with parameter $\bm\varepsilon\in\mathbb{R}^d$ and $\P_{Y,\X,T;\bm 0}=\P_{Y,\X,T}$.
Then, the efficient influence function $\bm\zeta$ is defined as the unique function such that
$$
\E \left[\bm\zeta\left(\frac{d \ell}{d\bm\varepsilon}\mid_{\bm\varepsilon=\bm 0}\right)^T\right]=\frac{d\bm\Gamma(\bm\varepsilon)}{d\bm\varepsilon}\mid_{\bm\varepsilon=\bm 0}
$$
i.e.,
$$
\E\left[\zeta_i\frac{d\ell}{d\varepsilon_j}\mid_{\bm\varepsilon=\bm 0}\right]=\frac{d\Gamma_i}{d\varepsilon_j}\mid_{\bm\varepsilon=\bm 0},\quad
\forall i,j=1,2,\cdots,d
$$
Notice that the efficient influence function $\zeta_i(Y,\X,T,\pi,\mu,\Gamma_i)$ for $\Gamma_i$ does not depend on $\bm\varepsilon$. Thus for each $i,j=1,2,\cdots,d$, the above equation can be proved using similar arguments as that in Section \ref{appendix_proof:EIF_single}.
\end{proof}

\subsection{Experimental Details} \label{apxsec: exp}

\subsubsection{Network Structure} For all methods, we implement the conditional density estimator as a neural network with two hidden fully connected layers, each consisting of 50 hidden
units using ReLU activation. Hidden feature $\z$ is defined as the latent representation extracted after the second ReLU activation. We set the number of grids $B=10$. 
The estimation of $\pi(t\mid\x)$ is computed as introduced in Section \ref{sec:structure}.
Following \cite{schwab2019learning}, we use 5 blocks for Dragonnet and DRNet. Structure of prediction head for each block is the same as the prediction head $\mu$ for VCNet, except that Dragonnet and DRNet do not use treatment-dependent weights. 
In VCNet, the prediction head for $\mu(t,\x)$ is a neural network with two hidden
fully connected layers stacking over the hidden feature $\z$. Each hidden layer
consists of 50 hidden units with ReLU activation.
We use B-spline with degree two and two knots placed at $\{1/3,2/3\}$ (altogether 5 basis). In this way, all methods have the same complexity, i.e., the number of parameters. 
We also tried different structures and found the relative performance of different methods to be similar. Thus,
all reported results below are based on this structure. All networks are trained for 800 epochs.

\subsubsection{Parameter Setting}
For each dataset we tune parameters based on 20 runs. In each run we simulate data, randomly split into training and testing, and use AMSE on testing data for evaluation.
We tune the following parameters. 
For all methods: network learning rate $\text{lr}\in\{0.05,0.005,0.001,0.0005,0.0001\}$ and
$\alpha\in\{1,0.5\}$.
For TR: learning rate for $\epsilon(t)$: $\text{lr}_{\epsilon}\in\{0.001,0.0001\}$, $\beta\in\{20,10,5\}\times n^{-1/2}$. 
We found that performance is not sensitive to $\alpha$. 
In estimator of $\epsilon(t)$, we use B-spline with degree 2 and tune the number of knots across $\{5,10,20\}$ (all equally spaced at $[0,1]$).
For TMLE and doubly robust estimator: we tune parameters of B-spline in the same way as in TR version. 
During tuning, all networks are trained for 800 epochs.

\subsubsection{Statistical baselines}
We implement Causal forest  \citep{wager2018estimation} using R package `grf' \citep{tibshirani2018package}, BART using R package `bartMachine' \citep{kapelner2016package}, and GPS using R package `causaldrf' \citep{galagate2015package}. 
We tune the paramters of each method on each dataset using 20 separate tuning sets, including the number of trees for BART, the number of trees and minimum node size for causal forest, and the number of knots for GPS. The other hyper-parameters are set to the default value of the R packages.


\subsubsection{Dataset}
\paragraph{Synthetic Dataset}
We generate data as follows: $
x_j \overset{\text{i.i.d.}}{\sim} \text{Unif}[0,1]$, where $x_j$ is the $j$-th dimension of $\x\in\mathbb{R}^6$, and
\begin{align*}
\widetilde{t}\mid\x = & \,\frac{10\sin(\max(x_{1},x_{2},x_{3}))+\max(x_{3},x_{4},x_{5})^{3}}{1+(x_{1}+x_{5})^{2}}+\sin(0.5x_{3})(1+\exp(x_{4}-0.5x_{3}))\\
& \,+x_{3}^{2}+2\sin(x_{4})+2x_{5}-6.5+\mathcal{N}(0,0.25),\\
y\mid\x,t = &\, \cos(2\pi(t-0.5))\left(t^{2}+\frac{4\max(x_{1},x_{6})^{3}}{1+2x_{3}^{2}}\sin(x_{4})\right)+\mathcal{N}(0,0.25),
\end{align*}
where $t=(1+\exp(-\tilde t))^{-1}$. Notice that $\pi(t\mid\x)$ only
depends on $x_{1},x_{2},x_{3},x_{4},x_{5}$ while $Q(t,\x)$ only
depends on $x_{1},x_{3},x_{4},x_{6}$. As discussed in \cite{shi2019adapting}, this allows us to observe the improvement using VCNet when noise covariates exist. Results are reported in Table \ref{table:experiment}.

\paragraph{IHDP}
The original semi-synthetic IHDP dataset from \cite{hill2011bayesian} contains binary treatments with 747 observations on 25 covariates. To allow comparison on continuous treatments, we randomly generate treatment and response using: 
\begin{align*}
\widetilde{t} \mid\x=&\,\frac{2x_{1}}{(1+x_{2})}+\frac{2\max(x_{3},x_{5},x_{6})}{0.2+\min(x_{3},x_{5},x_{6})}+2\tanh\left(5\frac{\sum_{i\in S_{\text{dis},2}}\left(x_{i}-c_{2}\right)}{\left|S_{\text{dis},2}\right|}\right)-4+\mathcal{N}(0,0.25),\\
y\mid\x,t =&\,\frac{\sin(3\pi t)}{1.2-t}\left(\tanh\left(5\frac{\sum_{i\in S_{\text{dis},1}}\left(x_{i}-c_{1}\right)}{\left|S_{\text{dis},1}\right|}\right)+\frac{\exp(0.2(x_{1}-x_{6}))}{0.5+5\min(x_{2},x_{3},x_{5})}\right)+\mathcal{N}(0,0.25),
\end{align*}
where $t=(1+\exp(-\tilde t))^{-1}$,
$S_{\text{con}}=\{1,2,3,5,6\}$ is the index set of continuous
features, 
$
S_{\text{dis},1} =\{4,7,8,9,10,11,12,13,14,15\}$,
$S_{\text{dis},2} =\{16,17,18,19,20,21,22,23,24,25\}
$ and $S_{\text{dis},1}\cup S_{\text{dis},2}=[25]-S_{\text{con}}$.
Here $c_{1}=\E\frac{\sum_{i\in S_{\text{dis},1}}x_{i}}{\left|S_{\text{dis},1}\right|}$,
$c_{2}=\E\frac{\sum_{i\in S_{\text{dis},2}}x_{i}}{\left|S_{\text{dis},2}\right|}$.
Notice that all continuous features are useful for $\pi(t\mid\x)$ and
$Q(t,\x)$ but only $S_{\text{dis},1}$ is useful for $Q$ and
only $S_{\text{dis},2}$ is useful for $\pi$. 
Following \cite{hill2011bayesian}, covariates are standardized with mean 0 and standard deviation 1 and the generated treatments are normalized to lie between $[0,1]$. Results are summarized in Table \ref{table:experiment}.

\paragraph{News}
The News dataset consists of 3000 randomly sampled news items from the NY Times corpus \citep{newman2008bag}, which was originally introduced as a benchmark in the binary treatment setting \citep{johansson2016learning}. We generate the treatment and outcome in a similar way as \citet{bica2020estimating}. We first generate $\boldsymbol{v}'_{1}$, $\boldsymbol{v}'_{2}$ and
$\boldsymbol{v}'_{3}$ from $\mathcal{N}(\boldsymbol{0},\boldsymbol{1})$
and then set $\boldsymbol{v}_{i}=\boldsymbol{v}'_{i}/\left\Vert \boldsymbol{v}'_{i}\right\Vert _{2}$
for $i=\{1,2,3\}$. Given $\boldsymbol{x}$, we generate $t$ from
$\text{Beta}\left(2,\left|\frac{\boldsymbol{v}_{3}^{\top}\x}{2\boldsymbol{v}_{2}^{\top}\x}\right|\right)$.
And we generate the outcome by 
\begin{align*}
y' & \mid\boldsymbol{x},t=\exp\left(\frac{\boldsymbol{v}_{2}^{\top}\x}{\boldsymbol{v}_{3}^{\top}\x}-0.3\right),\\
y & \mid\boldsymbol{x},t=2\left(\max(-2,\min(2,y'))+20\boldsymbol{v}_{1}^{\top}\boldsymbol{x}\right)*\left(4\left(t-0.5\right)^{2}*\sin\left(\frac{\pi}{2}t\right)\right)+\mathcal{N}(0,0.5).
\end{align*}
\end{document}